\definecolor{lightgray}{gray}{0.9}
\definecolor{lightgreen}{rgb}{0.8,1,0.8}
\definecolor{perfblue}{RGB}{64, 114, 175}
\definecolor{perfred}{RGB}{220, 60, 60}
\def\eqref#1{equation~\ref{#1}}
\def\1{\bm{1}}
\DeclareMathAlphabet{\mathsfit}{\encodingdefault}{\sfdefault}{m}{sl}
\SetMathAlphabet{\mathsfit}{bold}{\encodingdefault}{\sfdefault}{bx}{n}
\newcommand{\R}{\mathbb{R}}
\DeclareMathOperator*{\argmax}{arg\,max}
\theoremstyle{plain}
\newtheorem{example}{Example}
\newtheorem{lemma}{Lemma}
\newtheorem{pseudo}{Pseudo}
\renewcommand{\sectionautorefname}{\protect\textsection\@gobble}
\renewcommand{\subsectionautorefname}{\protect\textsection\@gobble}
\renewcommand{\subsubsectionautorefname}{\protect\textsection\@gobble}
\title{Non-Adversarial Inverse Reinforcement \\ Learning via Successor Feature Matching}
\def\@fnsymbol#1{\ensuremath{\ifcase#1\or *\or \includegraphics[height=0.65em]{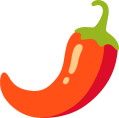} \or
\mathsection\or \mathparagraph\or \|\or **\or \dagger\dagger
\or \ddagger\ddagger \else\@ctrerr\fi}}
\author{
Arnav Kumar Jain$^{1, 2,}$\thanks{Correspondence to \texttt{arnav-kumar.jain@mila.quebec}.} ~~Harley Wiltzer$^{1, 3,}$\thanks{Author order decided by chili-eating contest; result was inconclusive.} ~~Jesse Farebrother$^{1, 3,}$\footnotemark[2] \\ ~\textbf{Irina Rish}$^{1,2}$ ~~\textbf{Glen Berseth}$^{1, 2}$ ~~\textbf{Sanjiban Choudhury}$^{4}$\\
$^1$Mila -- Qu\'ebec AI Institute ~~$^2$Université de Montréal~~ $^3$McGill University~~ $^{4}$Cornell University
}
\newacronym{rl}{RL}{reinforcement learning}
\newacronym{drl}{deep RL}{deep reinforcement learning}
\newacronym{dl}{DL}{deep learning}
\newacronym{mdp}{MDP}{Markov Decision Process}
\newacronym{sr}{SR}{successor representation}
\newacronym{sf}{SF}{successor features}
\newacronym{td}{TD}{temporal difference}
\newacronym{gpi}{GPI}{Generalized Policy Improvement}
\newacronym{gvf}{GVFs}{Generalized Value Functions}
\newacronym{usfa}{USFAs}{Universal Successor Feature Approximators}
\newacronym{uvf}{UVFs}{Universal Value Functions}
\newacronym{ts}{TS}{thomson sampling}
\newacronym{ecdp}{ECDP}{Extended Controlled Markov Process}
\newacronym{et}{ET}{Eligibility Traces}
\newacronym{gru}{GRU}{Gated Recurrent Unit}
\newacronym{mlp}{MLP}{Multi-Layer Perceptron}
\newacronym{cmp}{CMP}{Controlled Markov Process}
\newacronym{rnn}{RNN}{Recurrent Neural Network}
\newacronym{sm}{SM}{successor measures}
\newacronym{msve}{MSVE}{Maximum State-Visitation Entropy}
\newacronym{dpg}{DPG}{Deterministic Policy Gradient}
\newacronym{irl}{IRL}{Inverse Reinforcement Learning}
\newacronym{bc}{BC}{behavior cloning}
\newacronym{il}{IL}{imitation learning}
\newacronym{al}{AL}{apprenticeship learning}
\newacronym{mse}{MSE}{Mean Squared Error}
\newacronym{mc}{MC}{Monte Carlo}
\newacronym{sfm}{SFM}{Successor Feature Matching}
\newacronym{ipm}{IPM}{Integral Probability Metric}
\newcommand{\alg}{SFM}
\begin{document}

\maketitle

\begin{abstract}
In inverse reinforcement learning (IRL), an agent seeks to replicate expert demonstrations through interactions with the environment.
Traditionally, IRL is treated as an adversarial game, where an adversary searches over reward models, and a learner optimizes the reward through repeated RL procedures.
This game-solving approach is both computationally expensive and difficult to stabilize.
In this work, we propose a novel approach to IRL by \emph{direct policy search}: 
by exploiting a linear factorization of the return as the inner product of successor features and a reward vector, we design an IRL algorithm by policy gradient descent on the gap between the learner and expert features.
Our non-adversarial method does not require learning an explicit reward function and can be solved seamlessly with existing RL algorithms.
Remarkably, our approach works in state-only settings without expert action labels, a setting which behavior cloning (BC) cannot solve.
Empirical results demonstrate that our method learns from as few as a single expert demonstration and achieves improved performance on various control tasks.
\end{abstract}

\footnotetext{\\\footnotesize Our codebase is available at \href{https://github.com/arnavkj1995/SFM}{\texttt{https://github.com/arnavkj1995/SFM}}.}

\section{Introduction}
\begin{wrapfigure}{r}{0.5\textwidth} 
    \centering
    \vspace{-15pt}
    \includegraphics[width=0.48\textwidth]{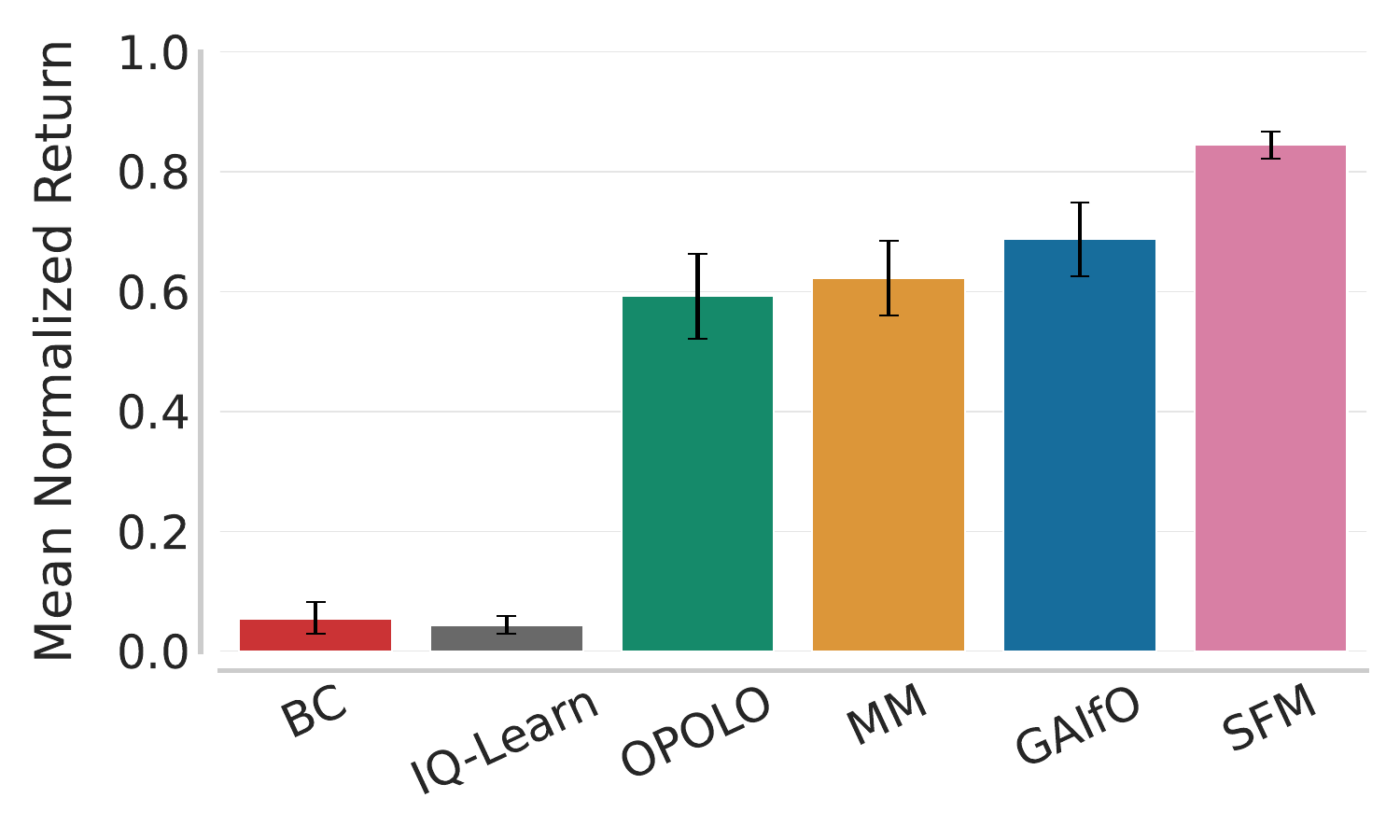}
    \vspace{-15pt}
    \caption{Comparing Mean Normalized Return on 10 tasks from DeepMind Control suite~\citep{tassa2018deepmind} of our method SFM against offline Behavior Cloning~\citep{pomerleau1988alvinn}, the non-adversarial IRL method IQ-Learn~\citep{garg2021iqlearn}, and the state-only adversarial methods OPOLO~\citep{zhu2020off}, MM~\citep{swamy2021moments} and GAIfO~\citep{torabi2018generative}, where the agents are provided a single expert demonstration.
    Our state-only non-adversarial method SFM achieves a higher mean normalized return.
    Error bars show the 95\% bootstrap CIs.} %
    \label{fig:results_intro}
\end{wrapfigure}

In imitation learning~\citep{abbeel2004apprenticeship,ziebart2008maximum,silver2016mastering,ho2016generative,swamy2021moments}, the goal is to learn a decision-making policy that reproduces \emph{behavior} from demonstrations.
Rather than simply mimicking the state-conditioned action distribution as in behavior cloning \citep{pomerleau1988alvinn}, interactive approaches like \acrlong{irl} \citep[\acrshort{irl};][]{abbeel2004apprenticeship,ziebart2008maximum} have the more ambitious goal of synthesizing a policy whose long-term occupancy measure approximates that of the expert demonstrator by some metric. 
As a result, \acrshort{irl} methods have proven to be more robust, particularly in a regime with few expert demonstrations, and has lead to successful deployments in real-world domains such as autonomous driving \citep[e.g.,][]{bronstein2022hierarchical,vinitsky2022nocturne,igl2205symphony}. 
However, this robustness comes at a cost: \acrshort{irl} approaches tend to involve a costly bi-level optimization.

Specifically, modern formulation of many \acrshort{irl} methods~\citep[e.g.,][]{garg2021iqlearn,swamy2021moments} involve a min-max game between an adversary that learns a reward function to maximally differentiate between the agent and expert in the outer loop and a \Gls{rl} subroutine over this adversarial reward in the inner loop. 
However, all such methods encounter a set of well-documented challenges: (1) optimizing an adversarial game between the agent and the expert can be unstable, often requiring multiple tricks to stabilize training~\citep{swamy2022minimax}, (2) the inner loop of this bi-level optimization
involves repeatedly solving a computationally expensive \gls{rl} problem~\citep{swamy2023inverse}, and (3) the reward function class must be specified in advance.
Moreover, many approaches to imitation learning require knowledge of the actions taken by the demonstrator. This renders many forms of demonstrations unusable, such as videos, motion-capture data, and generally any demonstrations leveraging an alternative control interface than the learned policy (e.g., a human puppeteering a robot with external forces). As such, it is desirable to build \acrshort{irl} algorithms where the imitation policies learn from only expert states.

These challenges lead us to the following research question:
\emph{Can a non-adversarial approach to occupancy matching recover the expert's behavior without action labels?}
To address this question, we revisit the earlier approaches to \emph{feature matching}~\citep{abbeel2004apprenticeship,ziebart2008maximum,syed2007game,syed2008apprenticeship}, that is, matching the accumulation of discounted state or state-action features along the expert's trajectory. 
For this task, we propose to estimate expected cumulative sum of features using Successor Features \citep[SF;][]{barreto2017successor} -- a low-variance, fully online algorithm that employs temporal-difference learning.
Leveraging the benefits of SFs, we demonstrate that \emph{feature matching can be achieved by direct policy search} via policy gradients.
In doing so, we present a new approach to IRL, called \gls{sfm}, which provides a
remarkably simple algorithm for imitation learning. 

Interestingly, when the learned features are action-independent, we show that \gls{sfm} can imitate an expert without knowledge of demonstrators' actions. This accommodates a variety of expert demonstration formats, such as video or motion-capture, where action labels are naturally absent.
Additionally, rather than manually pre-specifying a class of expert reward functions \citep{swamy2021moments}, \gls{sfm} \emph{adaptively} learns this class from data using unsupervised \gls{rl} techniques.
Our experiments validate that \gls{sfm} successfully learns to imitate from as little as a single expert demonstration.
As a result, \gls{sfm} outperforms its competitors by \textbf{16\%} on mean normalized returns across a wide range of tasks from the DMControl suite~\citep{tassa2018deepmind} ---as highlighted in \autoref{fig:results_intro}.
To summarize, the contributions of this work are as follows: 

1. \textbf{Occupancy matching via reduction to reinforcement learning.}\quad
In this work, we propose an algorithm for \emph{feature matching that can be achieved by direct policy search} via policy gradients for inverse RL.
In doing so, our method \acrfull{sfm} achieves strong imitation performance using any off-the-shelf \gls{rl} algorithms.

2. \textbf{Imitation from a single state-only demonstration.}\quad
Our method learns with demonstrations without expert action labels by using state-only features to estimate the SFs.
To our knowledge, \gls{sfm} is the \emph{only} online method capable of learning from a single unlabeled demonstration without requiring an expensive and difficult-to-stabilize bilevel optimization \citep{swamy2022minimax}.

\section{Related Work}

\textbf{\acrfull{irl}}
methods typically learn via adversarial game dynamics, where prior methods assume base features are known upfront~\citep{abbeel2004apprenticeship,ziebart2008maximum,syed2007game,syed2008apprenticeship}.
The advent of modern deep learning architectures led to methods that do not estimate expected features, but instead learn a more expressive reward function that captures the differences between the expert and the agent \citep{ho2016generative, swamy2021moments, fu2018learning}.
The class of Moment Matching \citep[MM;][]{swamy2021moments} methods offers a general framework that unifies existing algorithms through the concept of moment matching, or equivalently Integral Probability Metrics~\citep[IPM;][]{pmlr-v97-sun19b}.
In contrast to these methods, our approach is non-adversarial and focuses on directly addressing the problem of matching expected features.
Furthermore, unlike prior methods in Apprenticeship Learning~\citep[AL;][]{abbeel2004apprenticeship} and Maximum Entropy IRL~\citep{ziebart2008maximum}, our work \emph{does not} assume the knowledge of base features.
Instead, \alg{} leverages representation learning technique to extract relevant features from the raw observations.
The method most similar to ours is IQ-Learn~\citep{garg2021iqlearn}, a non-adversarial approach that utilizes an inverse Bellman operator to directly estimate the value function of the expert.
Our method is also non-adversarial but offers a significant advantage over IQ-Learn: it does not require knowledge of expert actions during training --- providing a state-only imitation learning algorithm~\citep {torabi2019recent}.
However, many existing state-only methods also rely on adversarial approaches~\citep{torabi2018generative,zhu2020off}.
For instance, GAIfO~\citep{torabi2018generative} modifies the discriminator employed in GAIL \citep{ho2016generative} to use state-only inputs, while OPOLO~\citep{zhu2020off} combines a similar discriminator with an inverse dynamics model to predict actions for expert transitions to regularize the agent. Similarly, R2I~\citep{gangwani2020state} proposed learning an indirect function to enable imitation when the transition dynamics change.
In contrast, \alg{} is a non-adversarial method that learns from state-only demonstrations.

\textbf{Successor Features~\citep[SF;][]{barreto2017successor}}
generalize the idea of the \acrlong{sr}~\citep{dayan1993improving} by modeling the expected cumulative state features discounted according to the time of state visitation.
Instead of employing successor features for tasks such as transfer learning \citep{barreto2017successor,lehnert2017advantages,barreto2018transfer,abdolshah2021new,wiltzer24dsm,wiltzer2024mvdrl}, representation learning \citep{lelan22generalization,lelan23bootstrap,lelan23subspace,farebrother2023proto,ghosh23icvf}, exploration \citep{zhang2017deep,machado2020count,jain2023maximum}, or zero-shot \gls{rl}~\citep{borsa2018universal,touati2021learning,touati2023does,park2024foundation}, our approach harnesses SFs for \acrshort{irl}, aiming to match expected features of the expert.
Within the body of work on imitation learning, SFs have been leveraged to pre-train behavior foundation models capable of rapid imitation \citep{pirotta2024fast} and within adversarial \acrshort{irl} typically serves as the basis for estimating the value function that best explains the expert \citep{lee2019truly,filos21psiphi,abdulhai22basis}.
In contrast, our work seeks to directly match SFs through a policy gradient update without requiring large, diverse datasets or costly bilevel optimization procedures.

\section{Preliminaries}
\label{sec:background}

\textbf{Reinforcement Learning~\citep[RL;][]{sutton2018reinforcement}} considers a \gls{mdp} defined by $\mathcal{M}=(\mathcal{S}, \mathcal{A}, P, r, \gamma, P_0)$, where $\mathcal{S}$ and $\mathcal{A}$ denote the state and action spaces, $P: \mathcal{S}\times\mathcal{A} \rightarrow \Delta(\mathcal{S})$ denotes the transition kernel, $r:\mathcal{S} \rightarrow \mathbb{R}$ is the reward function, $\gamma \in [0, 1)$ is the discount factor, and $P_0 \in \Delta(\mathcal{S})$ is the initial state distribution.
Starting from the initial state $S_0 \sim P_0(\cdot)$ an agent takes actions according to its policy $\pi : \mathcal{S} \rightarrow \Delta(\mathcal{A})$ producing trajectories
$\{S_0,A_1,S_1,\dots\} \sim \trajdist$ --- the probability measure over trajectories where $S_0\sim P_0$, $A_t\sim\pi(\cdot\mid S_t)$, and $S_{t+1}\sim P(\cdot\mid S_t, A_t)$.
We write $\mathbb{E}_\pi$ to denote expectations with respect to states and actions sampled under $\trajdist$.
The performance of a policy $\pi$ can be measured as the cumulative discounted sum of rewards obtained from an initial state, given by
\begin{equation}
J(\pi; r) = \mathbb{E}_{S \sim P_0(\cdot), A \sim \pi(\cdot\mid S)} \Big[ \underbrace{\mathbb{E}_{\pi} \Big[ \sum_{k=0}^\infty \gamma^k\, r(S_{t+k}) \mid S_t = S, A_t = A \Big]}_{Q^\pi_r(S, A)} \Big] \,,
\end{equation}
where $Q_r^\pi:\mathcal{S}\times\mathcal{A}\to\mathbb{R}$ is referred to as the action-value function.
When the reward function is unambiguous, we write $J(\pi)$ and $Q^\pi$ in place of $J(\pi; r)$ and $Q^\pi_r$.

Successor Features~\citep[SF;][]{barreto2017successor,andre20fast} allow us to linearly factorize the action-value function as $Q_r^\pi(s, a) = \pmb{\psi}^\pi(s, a)^\top w_r$ with the components: (1) $\pmb{\psi}^\pi : \mathcal{S} \times \mathcal{A} \to \mathbb{R}^d$ being the expected discounted sum of \emph{state features} $\pmb{\psi}^\pi(s, a) = \mathbb{E}_{\pi}\big[ \sum_{k=0}^\infty \gamma^k \phi(S_{t+k}) \!\mid\! S_t = s, A_t = a \big]$ after applying the feature map $\phi : \mathcal{S} \to \mathbb{R}^d$, and (2) $w_r \in \mathbb{R}^d$ being a linear projection of the reward function $r$ onto the components of $\phi$ defined as $w_r = \left(\mathrm{Cov}_{\pi}\left[ \phi(S) \right]\right)^{-1}\, \mathbb{E}_{\pi} \left[ r(S)\, \phi(S) \right]$ \citep{touati2021learning}.
In practice, we can learn a parametric model $\pmb{\psi}^\pi_\theta \approx \pmb{\psi}^\pi$ via Temporal Difference (TD) learning \citep{sutton88td} by minimizing the following least-squares TD objective,
\begin{equation}
    \label{eq:sf_loss}
    \mathcal{L}_{SF}(\theta; \bar{\theta}) = \mathbb{E}_{(S, A, S') \sim \mathcal{B},\, A' \sim \pi(\cdot\mid S')} \Big[ \| \phi(S) + \gamma \pmb{\psi}^\pi_{\bar{\theta}}(S', A') - \pmb{\psi}^\pi_{\theta}(S, A) \|_2^2 \Big],
\end{equation}
where the tuple $(S, A, S')$ is a state-action-next-state transition sampled from dataset $\mathcal{B}$. The parameters $\bar{\theta}$ denote the ``target parameters'' that are periodically updated from $\theta$ by either taking a direct copy or a moving average of $\theta$ \citep{mnih2015human}.

\textbf{Inverse Reinforcement Learning~\citep[IRL;][]{ng2000algorithms,abbeel2004apprenticeship,ziebart2008maximum}} is the task of deriving behaviors using demonstrations through interacting with the environment.
In contrast to \gls{rl} where the agent improves its performance using the learned reward, \gls{irl} involves learning without access to the reward function; good performance is signalled by expert demonstrations.
As highlighted in~\citet{swamy2021moments}, this corresponds to
minimizing an Integral Probability Metric (IPM)~\citep{pmlr-v97-sun19b} between the agent’s state-visitation occupancy and
the expert’s which is framed to minimize the
imitation gap given by:
\begin{equation}
    J(\pi_E) - J(\pi) \leq \sup_{f\in\mathcal{F}_{\phi}}\Bigg( \mathbb{E}_{\pi} \Big[\sum_{t=0}^{\infty} \gamma^tf(S_{t})\Big] -  \mathbb{E}_{\pi_E}\Big[\sum_{t=0}^{\infty} \gamma^tf(S_{t})\Big] \Bigg)
\end{equation}
where $\mathcal{F}_{\phi}:\mathcal{S} \rightarrow\mathbb{R}$ denotes the class of reward basis functions. 
Under this taxonomy, the agent being the minimization player selects a policy $\pi\in\Pi$ to compete with a discriminator that picks a reward moment function $f\in\mathcal{F}_{\phi}$ to maximize the imitation gap. This leads to a natural framing as a min-max game $\min_{\pi}\max_{f\in\mathcal{F_{\phi}}} J(\pi_E) - J(\pi)$.

By restricting the class of reward basis functions to be within span of some base-features $\phi$ such that $\mathcal{F}_{\phi}\in\{f(s)=\phi(s)^T w: \|w\|_2 \leq B \}$, the imitation gap becomes:
\begin{equation}
\begin{aligned}
    \label{eq:sf_moment_matching}
    J(\pi_E) - J(\pi) &\leq \sup_{\|w\|_2 \leq B}%
    \mathbb{E}_{\pi_E} \Big[ \sum_{t=0}^{\infty} \gamma^t\phi(S_{t})^\top w \Big] -  \mathbb{E}_{\pi} \Big[ \sum_{t=0}^{\infty} \gamma^t\phi(S_{t})^\top w \Big]\\
    &= \sup_{\|w\|_2 \leq B} \Big(\mathbb{E}_{S\sim P_0(\cdot),A\sim\pi_E(\cdot\mid S)} \big[\pmb{\psi}^{\pi_E}(S, A)\big] - \mathbb{E}_{S\sim P_0(\cdot),A\sim\pi(\cdot\mid S)} \big[\pmb{\psi}^{\pi}(S, A)\big]\Big)^\top w,
    \end{aligned}
\end{equation}
where $\pmb{\psi}^E(s, a)$ denotes the successor features of the expert policy $\pi_E$ for a given state $s$ and action $a$.
Under this assumption, the agent that matches the successor features with the expert will minimize the performance gap across the class of restricted basis reward functions.
A vector $w^\star$ optimizing the supremum in (\ref{eq:sf_moment_matching}) is referred to as a \emph{witness}---it describes a reward function that most clearly witnesses the distinction between $\pi$ and the expert demonstrations.
Solving the objective of matching expected features between the agent and the expert has been studied in prior methods where previous approaches often resort to solving an adversarial game~\citep{ziebart2008maximum,abbeel2004apprenticeship,syed2007game,syed2008apprenticeship}. In the sequel, we introduce a non-adversarial approach that updates the policy to align the SFs between the expert and the agent, and does not require optimizing an explicit reward function to capture the behavioral divergence.

Naturally, the aforementioned assumption requires that $\phi$ induces a class $\mathcal{F}_{\phi}$ that is rich enough to contain the expert's underlying reward function. It is not generally possible to ensure this without privileged information---instead, we jointly \emph{learn} features $\phi$ using recent advances in representation learning for RL \citep[e.g.,][]{farebrother2023proto,park2024foundation}. We posit that such features, which are meant to distinguish between a diverse set of behaviors, will be rich enough to include the expert's reward in their span. Our experimental results validate that indeed this can be achieved.

\section{Successor Feature Matching}
\label{sec:method}
In this section, we describe \acrfull{sfm} --- a state-only, non-adversarial algorithm for matching expected features between the agent and expert. 
The key concept underlying \gls{sfm} is that, leveraging successor features, the \emph{witness} $w$ in \eqref{eq:sf_moment_matching} can be approximated in closed form, yielding a reward function for \gls{rl} policy optimization.
Specifically, the difference in successor features between the agent and expert can itself act as the witness $w$---in this case, $w$ is parallel to the feature matching objective, implying that this witness maximally discriminates between the agent and expert's performance. Concretely, we have that,
\begin{equation}\label{eq:witness}
\begin{gathered}
\witness := B\frac{\expectsfinit[E] - \expectsfinit}{\big\|\expectsfinit[E] - \expectsfinit\big\|_2} \in \argmax_{\|w\|_2\leq B}\big(\expectsfinit[E] - \expectsfinit\big)^\top w,\,\, \text{where}\\
\expectsfinit[E] := \mathbb{E}_{S \sim P_0(\cdot),\,A \sim \pi_E(\cdot\mid S)} \big[
\sfsa[E](S, A)
\big]%
\quad\text{and}\quad%
\expectsfinit := \mathbb{E}_{S \sim P_0(\cdot),\,A\sim\pi(\cdot\mid S)}
\big[
\sfsa(S, A)
\big]
.
\end{gathered}
\end{equation}

Remarkably, this observation allows us to bypass the adversarial reward learning component of \gls{irl} by directly estimating the imitation-gap-maximizing reward function $\witnessrew[\pi_\policyparam]$ given by
\begin{equation}\label{eq:witness:reward}
\witnessrew[\pi_\policyparam](x) = \phi(x)^\top \witness \propto \phi(x)^\top \left(  \expectsfinit[E] - \expectsfinit\right) \, .
\end{equation}
This insight enables us to replace the costly bi-level optimization of \gls{irl} in favor of solving a single \gls{rl} problem.
The remaining challenge, however, is determining \emph{how} to estimate $\witness$.

\begin{algorithm}[tb]
   \caption{\acrfull{sfm}}
   \label{algo:pseudo_sfm_short}
\begin{algorithmic}[1]
    \REQUIRE Expert demonstrations $\tau^E=\{s_0^i, a_0^i,\dots,s_{T-1}^i, a^i_{T-1}\}_{i=1}^M$
    \REQUIRE Base feature loss $\featureloss$ and initialized parameters $\featureparam = (\phi, \featureaux)$
    \REQUIRE Initialized actor $\pi_\mu$, SF network and target $\sfsa[]_\theta, \sfsa[]_{\bar\theta}$, replay buffer $\mathcal{B}$
    \WHILE{Training}
        \STATE Rollout $\pi_{\mu}$ and add transitions to replay buffer $\mathcal{B}$ %
        \STATE Update expected features of expert $\expectsfinit[E]$ with EMA using (\ref{eq:expert_sf_demo})%
            \STATE Sample independent minibatches $\mathcal{D}, \mathcal{D}'\subset\mathcal{S}\times\mathcal{A}\times\mathcal{S}$ from $\mathcal{B}$
            \STATE Update SF network via $\nabla_\theta\mathbb{E}_{(S,A,S')\sim\mathcal{D},A'\sim\pi_\mu(\cdot\mid S')} \big[ \|\phi(S) + \pmb{\psi}_{\bar{\theta}}(S', A') - \pmb{\psi}_{\theta}(S, A) \|_2^{2} \big]$ %
            \STATE Estimate witness $\hat{w} = \expectsfinit[E] - \expectsfinit$ using \autoref{prop:propsf_stochastic} and minibatch $\mathcal{D}'$.
            \STATE Update actor via $\nabla_\policyparam U(\pi_\policyparam; s\mapsto \phi(s)^\top \hat{w})$ using \autoref{prop:L_gap_actor_gradients} and minibatch $\mathcal{D}$
            \STATE Update base feature function via $\nabla_{\featureparam}\featureloss(\featureparam)$
   \ENDWHILE
\end{algorithmic}
\end{algorithm}

A natural first step in estimating $\witness$ is to leverage the provided expert demonstrations consisting of $M$ trajectories
$\{\tau^i = (s_1^i, \dots, s^i_{T_i})\}_{i=1}^M$. These demonstrations allow us to compute $\expectsfinit[E]$ as,
\begin{equation}
    \label{eq:expert_sf_demo}
    \expectsfinit[E] = \frac{1}{M} \sum_{i=1}^M \sum_{t=1}^{T_i} \gamma^{t-1}\phi(s^i_{t}).
\end{equation}
A na\"ive approach might then attempt to estimate $\expectsfinit$ from initial states---however, this proves to be challenging, as it precludes bootstrapped TD estimates, and Monte Carlo estimators have prohibitively high variance for $\gamma$ near $1$.
Instead, we leverage a key result that allows us to estimate this quantity more effectively by bootstrapping from arbitrary transitions from the environment.
\begin{restatable}[]{proposition}{propsfstochastic}
\label{prop:propsf_stochastic}
    Let $\mathcal{B}$ denote a buffer of trajectories sampled from arbitrary stationary Markovian policies in the given MDP with initial state distribution $P_0$. For any stochastic policy $\pi$,
    \begin{equation}\label{eq:actor_loss_telescoping:stochastic}
        \expectsfinit = (1-\gamma)^{-1}\mathbb{E}_{(S,\, S')\sim \mathcal{B}}\Big[\mathbb{E}_{A\sim\pi(\cdot\mid S)}\big[\sfsa(S, A)\big] - \gamma\mathbb{E}_{A'\sim\pi(\cdot\mid S')}\big[\sfsa(S', A')\big]\Big]\,.
    \end{equation}
\end{restatable}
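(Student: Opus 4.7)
The plan is to reduce the identity to a telescoping argument along buffer trajectories, exploiting the fact that $g(s) := \mathbb{E}_{A\sim\pi(\cdot\mid s)}[\sfsa(s,A)]$ is a state-only function whose expectation depends only on the marginal distribution of each visited state, not on the policy that produced the visit. By the definition in (\ref{eq:witness}), the left-hand side is $\expectsfinit = \mathbb{E}_{S_0\sim P_0}[g(S_0)]$, so the goal reduces to proving $\mathbb{E}_{(S,S')\sim\mathcal{B}}[g(S) - \gamma g(S')] = (1-\gamma)\,\mathbb{E}_{S_0\sim P_0}[g(S_0)]$.

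First I would fix a single trajectory $(S_0, A_0, S_1, A_1, \ldots)$ from $\mathcal{B}$, generated by some stationary Markovian policy $\pi'$ with $S_0\sim P_0$. Since $\phi$ and hence $g$ are bounded and $\gamma\in[0,1)$, the partial sums telescope pointwise:
\begin{equation*}
    \sum_{t=0}^{T}\gamma^t\big(g(S_t) - \gamma g(S_{t+1})\big) = g(S_0) - \gamma^{T+1}g(S_{T+1}) \xrightarrow{T\to\infty} g(S_0)
\end{equation*}
almost surely. Taking expectations over buffer trajectories and swapping sum and expectation (by absolute summability) yields $\sum_{t\geq 0}\gamma^t\,\mathbb{E}[g(S_t) - \gamma g(S_{t+1})] = \mathbb{E}_{S_0\sim P_0}[g(S_0)]$. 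Note that the behavior policy $\pi'$ drops out, because only the initial-state marginal survives the cancellation.

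Finally I would identify the left-hand side of the previous display with $(1-\gamma)^{-1}\mathbb{E}_{(S,S')\sim\mathcal{B}}[g(S) - \gamma g(S')]$. This requires interpreting $\mathbb{E}_{(S,S')\sim\mathcal{B}}$ as the $\gamma$-discounted sampler of adjacent transitions, under which $(S,S')$ has joint density $(1-\gamma)\sum_t \gamma^t\,P^{\pi'}(S_t=\cdot,\, S_{t+1}=\cdot)$—the standard off-policy TD semantics when a discount $\gamma$ sits in the target. Rearranging then gives the claim. The main obstacle is interpretive rather than technical: one must pin down this sampling convention so that the implicit geometric weights line up with the $\gamma^t$ inside the telescoping. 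Once that is settled, the entire proof is bounded absolute convergence plus a telescoping sum, and the fact that $\pi'$ drops out is precisely the off-policy guarantee that the proposition is asserting.
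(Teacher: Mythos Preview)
Your proposal is correct and rests on the same reduction as the paper: define the state-only function $g(s)=\mathbb{E}_{A\sim\pi(\cdot\mid s)}[\sfsa(s,A)]$, interpret sampling from $\mathcal{B}$ as sampling transitions under a discounted state-action occupancy, and telescope. The execution differs slightly. The paper first isolates a standalone lemma (a vector-valued generalization of a telescoping identity from \citet{garg2021iqlearn}) stated directly in terms of an abstract discounted occupancy measure $\mu$, and then handles the buffer-as-mixture case by invoking convexity of the space of such occupancies \citep{dadashi2019value} to conclude that the uniform mixture over the buffer's per-policy occupancies is itself a valid $\mu$. Your route is more elementary: you telescope pointwise along each trajectory and average, so the mixture over behavior policies never needs to be certified as an occupancy measure at all---the cancellation happens trajectory-by-trajectory before any structural fact is needed. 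What the paper's packaging buys is a reusable lemma and a clean separation between the telescoping identity and the buffer semantics; what your version buys is self-containment (no external convexity citation) and the observation, which you make explicit, that the only delicate step is fixing the $\gamma$-discounted sampling convention for $(S,S')\sim\mathcal{B}$.
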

The proof of \autoref{prop:propsf_stochastic} is deferred to \autoref{app:theory}.
Notably, a key aspect of this estimator is its ability to use samples from a different state-visitation distribution, effectively allowing us to use a replay buffer $\mathcal{B}$ of the agent's experience to estimate $\expectsfinit$.

With estimates of $\expectsfinit[E]$ and $\expectsfinit$ in hand, one might be tempted to apply any RL algorithm to optimize $\witnessrew$. However, this approach overlooks the structure provided by the learned successor features. Instead, we show how these features can be directly leveraged to derive a novel policy gradient method that optimizes the policy to directly match the difference in features.

\subsection{A Policy Gradient Method for Successor Feature Matching}
\label{sec:sfm_actor_critic}

Instead of directly optimizing the reward function $\witnessrew$, we now derive a policy gradient \citep{sutton99pg} that directly aligning the successor features of the agent and expert.
To this end, we leverage our learned successor features --- which are already required for estimating the witness $\witness$ --- to calculate the value-function under the reward $\witnessrew$, eliminating the need for a separate critic to estimate $Q^\pi_{\witnessrew}$.
This defines the following off-policy policy gradient objective \citep{degris12offpolicy} for the policy $\pi_\policyparam$ parametrized by $\policyparam$:
\begin{equation}
U(\pi_\policyparam; \witnessrew[\pi_\policyparam])
= \mathbb{E}_{S\sim\occup[\beta], A\sim\pi_\policyparam(\cdot\mid S)}\big[ \underbrace{\sfsa[\pi_\policyparam]_\theta(S, A)^\top\witness[\pi_\policyparam]}_{Q^{\pi_\policyparam}_{\witnessrew[\pi_\policyparam]}} \big],
\end{equation}
where $\beta : \mathcal{S} \to \Delta(\mathcal{A})$ is a policy different from $\pi_\policyparam$.
In practice, we can view $\occup[\beta]$ as being a replay buffer $\mathcal{B}$ containing experience collected throughout training.
Given this objective, we now derive the policy gradient with respect to $\policyparam$ in the following result with the proof given in \autoref{app:theory}.
\begin{restatable}[]{proposition}{propactor}
\label{prop:L_gap_actor_gradients}
For stochastic policies $\pi : \mathcal{S} \to \Delta(\mathcal{A})$ the policy gradient under which the return most steeply increases for the reward function $\witnessrew[\pi_\policyparam]$ defined in \eqref{eq:witness:reward} is given by,
\begin{equation}\label{eq:pgstoch}
    \nabla_{\policyparam} U(\pi_\policyparam; \witnessrew[\pi_\policyparam]) =
    {\big(\witness[\pi_\mu]\big)}^\top
    \left(
    \mathbb{E}_{S \sim \occup[\beta],A \sim \pi_\mu(\cdot\mid S)} \left[
    \nabla_\policyparam \log \pi_\mu(A\mid S)\,\,
    \pmb{\psi}^{\pi_\mu}_{\theta}(S, A)
    \right]
    \right) \,.
\end{equation}
Alternatively, for \emph{deterministic} policies $\pi:\mathcal{S}\to\mathcal{A}$, the deterministic policy gradient \citep{silver2014deterministic} for the reward function $\witnessrew[\pi_\mu]$ defined in \eqref{eq:witness:reward} is given by,
\begin{equation}\label{eq:pgdeterm}
    \nabla_\policyparam U(\pi_\policyparam; \witnessrew[\pi_\policyparam]) = \big( \witness[\pi_\policyparam] \big)^\top \left( \mathbb{E}_{S \sim \occup[\beta]} \left[ \nabla_\policyparam \pi_\policyparam(S)\,\, \nabla_{A} \pmb{\psi}^{\pi_\mu}_\theta(S, A) \right] \right) \, .
\end{equation}
\end{restatable}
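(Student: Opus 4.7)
The plan is to reduce both statements to direct applications of known policy gradient theorems, using the fact that under the linear factorization of $Q$ via successor features, the objective $U(\pi_\policyparam;\witnessrew[\pi_\policyparam])$ takes a particularly clean form. First I would observe that, for the reward $\witnessrew[\pi_\policyparam](s)=\phi(s)^\top \witness[\pi_\policyparam]$ defined in (\ref{eq:witness:reward}), the linear factorization $Q_r^\pi(s,a)=\pmb{\psi}^\pi(s,a)^\top w_r$ gives $Q^{\pi_\policyparam}_{\witnessrew[\pi_\policyparam]}(s,a) = \pmb{\psi}^{\pi_\policyparam}(s,a)^\top \witness[\pi_\policyparam]$. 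Consequently, $U(\pi_\policyparam;\witnessrew[\pi_\policyparam]) = \mathbb{E}_{S\sim\occup[\beta],A\sim\pi_\policyparam(\cdot\mid S)}[Q^{\pi_\policyparam}_{\witnessrew[\pi_\policyparam]}(S,A)]$ is precisely the $\occup[\beta]$-weighted off-policy excursions objective of \citet{degris12offpolicy} evaluated under the fixed reward $\witnessrew[\pi_\policyparam]$.

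Next, I would treat $\witness[\pi_\policyparam]$ as a frozen reward parameter when differentiating with respect to $\policyparam$ (the standard semi-gradient stance: although $\witness[\pi_\policyparam]$ is estimated from policy-dependent quantities, it serves as a target, much like the adversary's witness in moment matching IRL). Under this treatment, for \textbf{stochastic} policies the off-policy policy gradient theorem yields
\begin{equation*}
    \nabla_\policyparam U(\pi_\policyparam;\witnessrew[\pi_\policyparam]) = \mathbb{E}_{S\sim\occup[\beta],A\sim\pi_\policyparam(\cdot\mid S)}\big[\nabla_\policyparam \log \pi_\policyparam(A\mid S)\, Q^{\pi_\policyparam}_{\witnessrew[\pi_\policyparam]}(S,A)\big].
\end{equation*}
Substituting the linear factorization, replacing the true successor features by the learned approximation $\pmb{\psi}^{\pi_\policyparam}_\theta$, and using linearity of expectation to pull the (frozen) vector $\witness[\pi_\policyparam]$ outside produces (\ref{eq:pgstoch}). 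For \textbf{deterministic} policies, I would instead invoke the deterministic policy gradient theorem of \citet{silver2014deterministic}, which gives
\begin{equation*}
    \nabla_\policyparam U(\pi_\policyparam;\witnessrew[\pi_\policyparam]) = \mathbb{E}_{S\sim\occup[\beta]}\big[\nabla_\policyparam \pi_\policyparam(S)\,\nabla_A Q^{\pi_\policyparam}_{\witnessrew[\pi_\policyparam]}(S,A)\big|_{A=\pi_\policyparam(S)}\big],
\end{equation*}
and differentiating $\pmb{\psi}^{\pi_\policyparam}_\theta(S,A)^\top \witness[\pi_\policyparam]$ in $A$ again factors $\witness[\pi_\policyparam]$ out, yielding (\ref{eq:pgdeterm}).

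The main obstacle I expect is not algebraic—both substitutions are one-line calculations—but rather conceptual justification for two simultaneous ``stop-gradient'' choices: ignoring the dependence of $\witness[\pi_\policyparam]$ on $\policyparam$, and ignoring the dependence of the critic $\pmb{\psi}^{\pi_\policyparam}_\theta$ on $\policyparam$. The latter is entirely standard (any actor-critic derivation makes this move, treating the critic as a learned plug-in estimator). The former is the real subtlety: a fully honest derivation would either (i) appeal to the IRL interpretation in which $\witness[\pi_\policyparam]$ is the current witness produced by an adversary-free procedure and is therefore held fixed during the actor update, or (ii) note that the off-policy gradient in \citet{degris12offpolicy} is itself an approximation that drops the gradient of $Q^{\pi_\policyparam}$ through the state distribution, so dropping the analogous gradient through $\witness[\pi_\policyparam]$ is consistent with the same approximation philosophy. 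I would make this distinction explicit in the proof so that readers understand what is exact and what is the semi-gradient convention.
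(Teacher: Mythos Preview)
Your proposal is correct and follows essentially the same route as the paper: identify $Q^{\pi_\policyparam}_{\witnessrew[\pi_\policyparam]}(s,a)=\pmb{\psi}_\theta(s,a)^\top\witness[\pi_\policyparam]$, invoke the off-policy policy gradient of \citet{degris12offpolicy} (log-derivative trick) in the stochastic case and the deterministic policy gradient of \citet{silver2014deterministic} in the deterministic case, then factor the constant vector $\witness[\pi_\policyparam]$ outside the expectation. The paper's proof treats $\witness[\pi_\policyparam]$ as fixed without comment, so your explicit discussion of the semi-gradient convention is a welcome addition rather than a deviation.
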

From \autoref{prop:L_gap_actor_gradients} we can see that the SFM policy gradient operates by directly optimizing the alignment between the agent's successor features and the expert's by changing the policy in the direction that best aligns $\pmb{\psi}^{\pi_\mu}$ with $\pmb{\psi}^{E}$. This approach simplifies policy optimization by leveraging the computed successor features to directly guide the alignment of the agent's feature occupancy with that of the expert.
Furthermore, \autoref{prop:L_gap_actor_gradients} along with \autoref{eq:sf_loss} provide drop-in replacements to the actor and critic losses found in many popular methods \citep[e.g.,][]{fujimoto2018td3,fujimoto2023for,haarnoja2018soft} allowing for the easy integration of SFM with most actor-critic methods.

The overall \gls{sfm} policy gradient method, summarized in \autoref{algo:pseudo_sfm_short}, encompasses both estimating the witness $\witness$ as well as the policy optimization procedure to reduce the imitation gap using this witness.
However, so far, we have assumed access to base features $\phi$ when estimating the successor features.
In the following, we describe how these features, too, can be learned from data.

\subsection{Base Feature Function}
\label{sec:sfm_base_feature_function}
We described in \autoref{sec:background} that SFs depends on a base feature function $\phi: \mathcal{S} \rightarrow \mathbb{R}^d$.
In this work, \gls{sfm} learns the base features jointly while learning the policy. Base feature methods are parameterized by pairs $\featureparam = (\phi, \featureaux)$ together with a loss $\featureloss$, where $\phi:\mathcal{S}\to\mathbb{R}^d$ is a state feature map, $\featureaux$ is an auxiliary object that may be used to learn $\phi$, and $\featureloss$ is a loss function defined for $\phi$ and $\featureaux$.

Before discussing the learning of base features, we note an important point: when we don't have access to expert actions, we can still compute $\expectsfinit$ and the policy gradient by using action-independent base features.
While our method can handle problems where rewards depend on both states and actions (requiring expert action labels), in many practical applications expert actions are unavailable. As our experiments in \autoref{sec:experiments} demonstrate, SFM can effectively learn imitation policies without requiring access to expert actions substantially broadening the applicability of our approach.

Below, we briefly outline the base feature methods considered in this work.

\textbf{Random Features (Random)}: 
Here, $\phi$ is a randomly-initialized neural network, and $\featureaux$ is discarded. The network $\phi$ remains fixed during training ($\featureloss\equiv 0$).

\textbf{Autoencoder Features (AE)}:
Here, $\phi:\mathcal{S}\to\mathbb{R}^d$ compresses states to latents in $\R^d$, and $\featureaux:\mathbb{R}^d\to\mathcal{S}$ tries to reconstruct the state from the latent. The loss $\featureloss$ is given by the AE loss $\featurelossfor{AE}$,
\begin{equation}
    \label{eq:ae}
    \featurelossfor{AE}(\featureparam) = \mathbb{E}_{S\sim\mathcal{B}}\left[\|\featureaux(\phi(S)) - S\|_2^2\right],\quad \featureparam=(\phi, \featureaux).
\end{equation}%

\textbf{Inverse Dynamics Model \citep[IDM;][]{pathak2017curiosity}}:
Here, $\featureaux:\mathbb{R}^d\times\mathbb{R}^d\to\mathcal{A}$ is a function that tries to predict the action that lead to the transition between embeddings $\phi : \mathcal{S} \to \mathbb{R}^d$ of consecutive states. The loss $\featureloss$ is given by the IDM loss $\featurelossfor{IDM}$,
\begin{equation}
    \label{eq:idm}
    \featurelossfor{IDM}(\featureparam) = \mathbb{E}_{(S, A, S')\sim\mathcal{B}}\left[\|\featureaux(\phi(S), \phi(S')) - A\|_2^2\right],\quad \featureparam = (\phi, \featureaux).
\end{equation}

\textbf{Forward Dynamics Model (FDM)}:
Here, $\featureaux:\mathbb{R}^d\times\mathcal{A}\to\mathcal{S}$ is a function that tries to predict the next state in the MDP given the embedding of the current state and the chosen action. The loss $\featureloss$ is given by the FDM loss $\featurelossfor{FDM}$,
\begin{equation}\label{eq:fdm}
\featurelossfor{FDM}(\featureparam) = \mathbb{E}_{(S, A, S')\sim\mathcal{B}}\left[\|\featureaux(\phi(S), A) - S'\|_2^2\right],\quad \featureparam = (\phi, \featureaux).
\end{equation}

\textbf{Hilbert Representations \citep[HR;][]{park2024foundation}}: 
The feature map $\phi : \mathcal{S} \to \mathbb{R}^d$ of HR is meant to estimate a \emph{temporal distance}: the idea is that the difference between state embeddings $f^*_\phi(s, g) = \|\phi(s) - \phi(g)\|$ approximates the amount of timesteps required to traverse between the state $s \in \mathcal{S}$ and randomly sampled goal $g \in \mathcal{S}$. Here, $\featureaux$ is discarded, and $\featureloss$ is the HR loss $\featurelossfor{HR}$,
\begin{equation}\label{eq:hilp}
\featurelossfor{HR}(\featureparam) = \mathbb{E}_{(S, S')\sim\mathcal{B},\,G\sim\mathcal{B}}\left[\ell^2_\tau\left(-\mathbb{I}(S\neq G) - \gamma\mathsf{sg}\{f^*_\phi(S', G)\} + f_\phi^*(S, G)\right)\right],\ \featureparam = (\phi, \emptyset),
\end{equation}
where $\mathsf{sg}$ denotes the stop-gradient operator, $\gamma$ is the discount factor, and $\ell^2_\tau$ is the $\tau$-expectile loss~\citep{newey1987asymmetric}, as a proxy for the $\max$ operator in the Bellman backup~\citep{kostrikov2021offline}. In practice, $\mathsf{sg}\{f^*_\phi(S', G)\}$ is replaced by $f^*_{\bar{\phi}}(S', G)$, where $\bar{\phi}$ is a delayed \emph{target network} tracking $\phi$, much like a target network in DQN \citep{mnih2015human}.

Finally, our framework does not preclude the use of adversarially-trained features, although we maintain that a key advantage of the framework is that it \emph{does not require} adversarial training. To demonstrate the influence of such features, we consider training base features via \gls{irl}.

\textbf{Adversarial Representations (Adv)}:
The embedding $\phi:\mathcal{S}\to\mathbb{R}^d$ is trained to maximally distinguish the features on states visited by the learned policy from the expert policy. That is, $\featureloss$ is given by $\featurelossfor{Adv}$ which adversarially maximizes an imitation gap similar to \eqref{eq:sf_moment_matching},
\begin{equation}
    \label{eq:adv}
    \featurelossfor{Adv}(\featureparam)
    = -\left\|\mathbb{E}_{\pi}\big[\phi(S)\big] - \mathbb{E}_{\pi_E}\big[\phi(S')\big]\right\|_2^2,\quad\featureparam = (\phi, \emptyset).
\end{equation}

In our experiments, we evaluated \gls{sfm} with each of the base feature methods discussed above. A comparison of their performance is given in \autoref{fig:results_returns_features}.
Our \gls{sfm} method adapts familiar deterministic policy gradient algorithms, particularly TD3 \citep{fujimoto2018td3} and TD7 \citep{fujimoto2023for}, for policy optimization through the actor loss of \eqref{eq:pgdeterm}. We further provide results for stochastic policy gradient methods in \autoref{subsec:sfm-stochastic}. Full implementation details are provided in \autoref{app:implementation}, and we now demonstrate the performance of \gls{sfm} in the following section.

\section{Experiments}\label{sec:experiments}
Through our experiments, we aim to analyze (1) how well \alg{} performs relative to competing non-adversarial and state-only adversarial methods at imitation from a single expert demonstration, (2) the robustness of \alg{} and its competitors to their underlying policy optimizer, and (3) which features lead to strong performance for \alg{}. Our results are summarized in Figures \ref{fig:results_rliable_main}, \ref{fig:results_rliable_td3}, and \ref{fig:results_returns_features}, respectively, and are discussed in the subsections below.
Ultimately, our results confirm that \gls{sfm} indeed outperforms its competitors, achieving state-of-the-art performance on a variety of \textbf{single-demonstration} tasks, even surpassing the performance of agents that have access to expert actions.

\subsection{Experimental Setup} 
We evaluate our method 10 environments from the DeepMind Control~\citep[DMC;][]{tassa2018deepmind} suite.
Following the investigation in \citet{jena2020addressing} which showed that the \gls{irl} algorithms are prone to biases in the learned reward function, we consider infinite horizon tasks where all episodes are truncated after 1000 steps in the environment.
For each task, we collected expert demonstrations by training a TD3~\citep{fujimoto2018td3} agent for 1M environment steps.
In our experiments, the agent is provided with a single expert demonstration which is kept fixed during the training phase.
The agents are trained for 1M environment steps and we report the mean performance across 10 seeds with 95\% confidence interval shading following the best practices in RLiable~\citep{agarwal2021deep}.
For the RLiable plots, we use the returns obtained by a random policy and the expert policy to compute the normalized returns. 
Our implementation of \gls{sfm} is in Jax~\citep{jax2018github} and it takes about $\sim$2.5 hours for one run on a single NVIDIA A100 GPU.
We provide implementation details in \autoref{app:implementation} and hyperparameters in \autoref{app:hyperparameters}.

\textbf{Baselines.}
Our baselines include a state-only version of moment matching~\citep[MM;][]{swamy2021moments}, which is an adversarial \gls{irl} approach where the integral probability metric~(IPM) is replaced with the Jenson-Shannon divergence~(which was shown to achieve better or comparable performance with GAIL~\citep{swamy2022minimax}).
We implemented state-only MM by changing the discriminator network to depend only on the state and not on actions.
Furthermore, we replace the \gls{rl} optimizer in MM to TD7~\citep{fujimoto2023for} to keep parity with \gls{sfm}.
We compare \alg{} to another state-only baseline GAIfO~\citep{torabi2018generative} where the discriminator learns to distinguish between the state transitions of the expert and the agent.
Since, to our knowledge, no official implementation of GAIfO is available, we implemented our version of GAIfO with a similar architecture to the MM framework. 
Here, we replace their use of TRPO~\citep{schulman2015trust} with either TD3 or TD7 to maintain parity.
Additionally, for adversarial approaches, we impose a gradient penalty~\citep{gulrajani2017improved} on the discriminator, learning rate decay, and Optimistic Adam~\citep{daskalakis2017training} to help stabilize training.
We also compare with OPOLO~\citep{zhu2020off} and use the official implementation for our experiments.
Apart from state-only adversarial approaches, we compare with \acrlong{bc} \citep[\acrshort{bc};][]{pomerleau1988alvinn} which is a supervised learning based imitation learning method trained to match actions taken by the expert.
Lastly, we compare \alg{} with IQ-Learn~\citep{garg2021iqlearn} -- a non-adversarial \gls{irl} algorithm which learns the Q-function using inverse Bellman operator~\citep{piot2016bridging}.
Notably, \acrshort{bc} and IQ-Learn require the expert action labels.
\begin{figure}[t]
    \noindent\includegraphics[width=\linewidth]{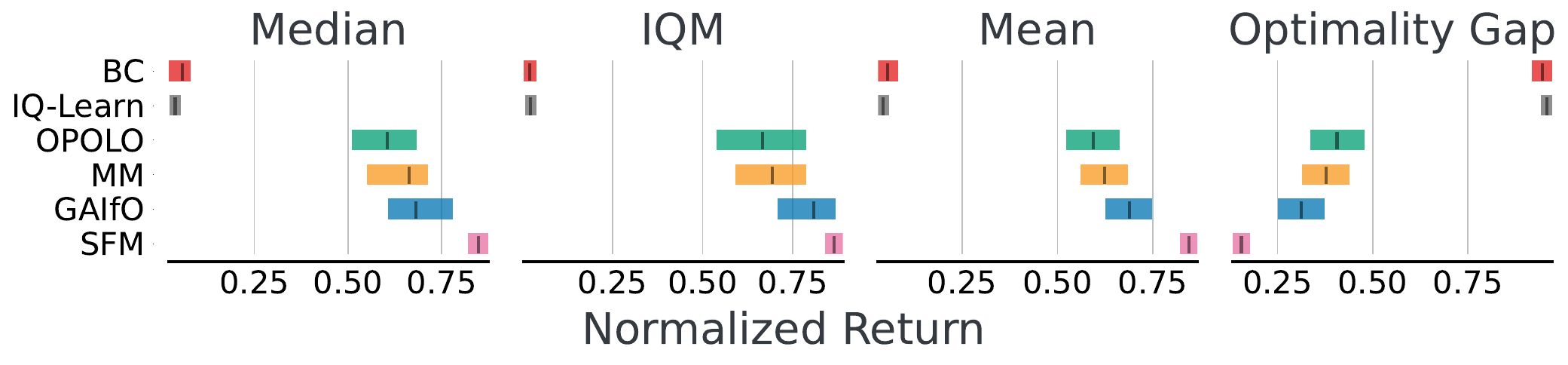} 
    \vspace{-2em}
    \caption{RLiable~\citep{agarwal2021deep} plots of the proposed method SFM with an offline method BC, a non-adversarial method IQ-Learn that uses expert action labels and adversarial state-only methods: OPOLO, MM and GAIfO across 10 tasks from DMControl suite. }
    \label{fig:results_rliable_main}
\end{figure}

\begin{figure}[]
    \includegraphics[width=\linewidth]{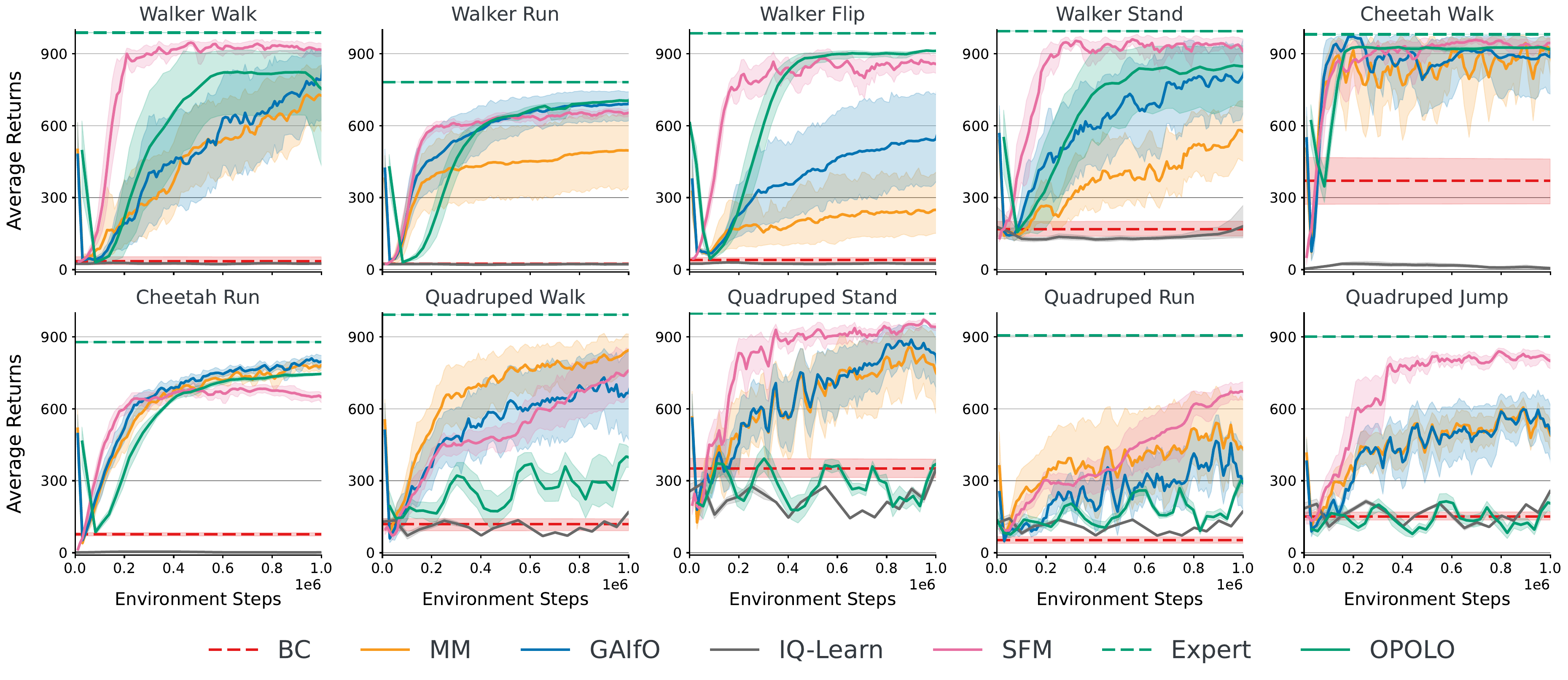} 
    \vspace{-2em}
    \caption{
    Per-task learning curves of \gls{irl} methods with the TD7 \citep{fujimoto2023for} policy optimizer on single-demonstration imitation in DMC. Notably, IQ-Learn and BC require access to expert actions, while (state-only) MM, GAIfO, and \alg{} learn from expert states alone. Results are averaged across 10 seeds, and are shown with 95\% confidence intervals.
    }
    \label{fig:results_returns_main}
\end{figure}

\begin{figure}[t]
    \noindent\includegraphics[width=\linewidth]{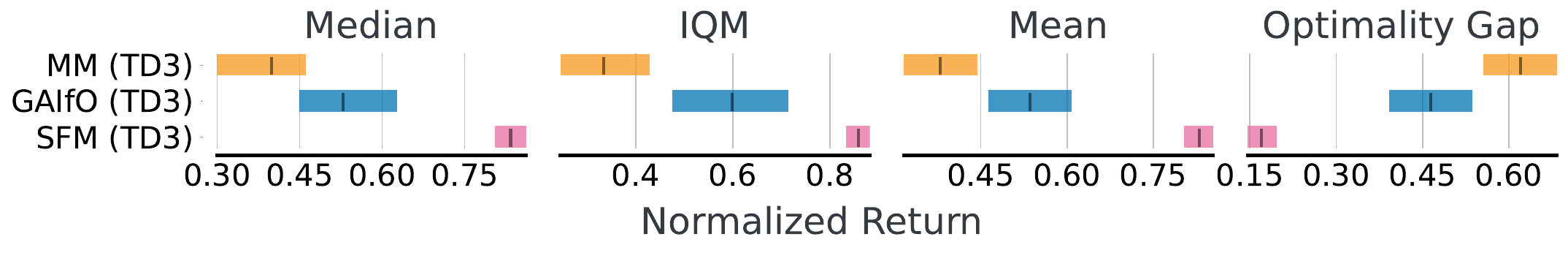} 
    \vspace{-2em}
    \caption{
    Performance of state-only \gls{irl} algorithms under the weaker TD3 policy optimizer.
    }
    \label{fig:results_rliable_td3}
\end{figure}
\begin{figure}
    \noindent\includegraphics[width=.95\linewidth]{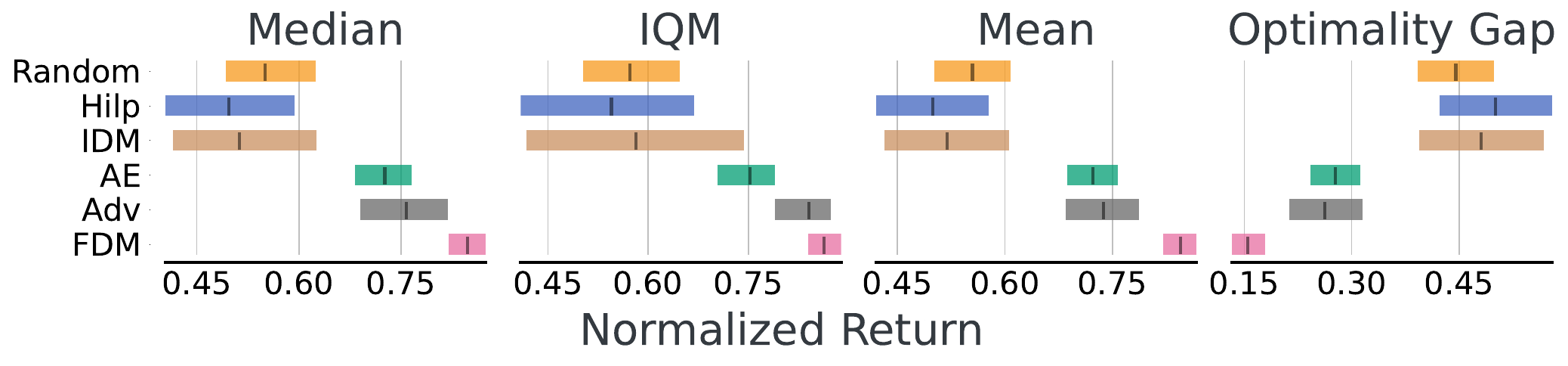} 
    \vspace{-1em}
    \caption{
    Effect of different base features on the performance of SFM. 
    Here, we compare with Random, Inverse Dynamics Model (IDM), Hilbert Representations (Hilp), Autoencoder~(AE), Adversarial~(Adv) and Forward Dynamics Models (FDM). 
    FDM was found to work best across DMC tasks. Note that all base feature functions were jointly learned during training.
    }
    \label{fig:results_rliable_features}
\end{figure}

\subsection{Results}
\textbf{Quantitative Results}\quad
\autoref{fig:results_rliable_main} presents the RLiable plots~\citep{agarwal2021deep} aggregated over DMC tasks.
We observe that \gls{sfm} learns to solve the task with a single demonstration and significantly outperforms the non-adversarial \acrshort{bc}~\citep{pomerleau1988alvinn} and IQ-Learn~\citep{garg2021iqlearn} baselines.
Notably, \alg{} achieves this without using the action labels from the demonstrations. 
We believe \gls{bc} fails in this regime of few expert demonstrations as the agent is unpredictable upon encountering states not in the expert dataset~\citep{ross2010efficient}.
We further observe that \alg{} outperforms our implementation of state-only adversarial baselines.%
Furthermore, \alg{} has a significantly lower optimality gap, indicating that the baselines are more likely to perform much worse than the expert.
Among the state-only adversarial approaches, GAIfO leverages a more powerful discriminator based on the state transition as compared to only states used in MM and thereby performs better.
To further analyze the gains, we report the average returns across each task in \autoref{fig:results_returns_main}. 
We observe that OPOLO does well on walker and cheetah domains; however, it struggles with the quadruped domain where we posit its reliance on a learned inverse dynamics model becomes problematic due to the challenges of accurately modeling these more complex dynamics. 
We observe that \alg{} converges faster when compared to leading methods, suggesting improved sample efficiency relative to its competitors.
\gls{sfm} does not use techniques like gradient penalties which are often required when training adversarial methods \citep{swamy2021moments,ren2024hybrid}. 
Lastly, \alg{} outperforms MM and GAIfO on most tasks across the quadruped and walker domains.

\textbf{Robustness with weaker policy optimizers}\quad 
In this work, the network architecture for \gls{sfm} and the state-only baselines is inspired from TD7~\citep{fujimoto2023for}.
TD7 is a recent algorithm presenting several tricks to attain improved performance relative to its celebrated predecessor TD3.
To evaluate how robust these methods are to the quality of the \gls{rl} algorithm, we also study the performance characteristics when employing the relatively weaker TD3 optimizer.
The RLiable plots in 
\autoref{fig:results_rliable_td3} present the efficacy of \alg{} to learn with the less performant TD3 optimizer.
Remarkably, the performance of \gls{sfm} (TD3) is similar to the SFM~(\autoref{fig:results_rliable_main}) demonstrating the efficacy of our non-adversarial method to learn with other off-the-shelf \gls{rl} algorithms.
However, the adversarial baselines did not perform as well when deployed with TD3.
To further understand the performance difference, in \autoref{fig:results_returns_td3} we see that \alg{} attains significant performance gains across tasks in the quadruped domain.
In contrast, the adversarial state-only baselines perform similarly on tasks in the cheetah and walker domains for both \acrshort{rl} optimizers.  

\textbf{Ablation over base features}\quad 
In \autoref{fig:results_returns_features}, we study the performance of \gls{sfm} with various base feature~$\phi$.
As discussed in \autoref{sec:sfm_base_feature_function}, we experiment with Random Features, Inverse Dynamics Models~\citep[IDM;][]{pathak2017curiosity}, Hilbert Representations~\citep[Hilp;][]{park2024foundation}, Forward Dynamics Model~(FDM), Autoencoder (AE), and Adversarial~(Adv) features. 
Through our experiments, we observe that FDM achieves superior results when compared with other base feature functions (\autoref{fig:results_rliable_features}).
In \autoref{fig:results_returns_features} and \autoref{tab:numbers_features}, we see that, IDM features performed similarly to FDM on walker and cheetah domains, but did not perform well on quadruped tasks.
We believe it is challenging to learn IDM features on quadruped domain which has been observed in prior works~\citep{park2024foundation,touati2023does}.
Similar trends were observed for Hilp features and we suspect that learning the notion of temporal distance during online learning is challenging as the data distribution changes while training. 
Random features performed well on quadruped domain, but did not perform well on cheetah and walker tasks.
Autoencoder~(AE) and adversarial~(Adv) features did well across RLiable metrics, however FDM features achieved better performance-- we suspect that leveraging structure from the dynamics leads to superior performance.
Moreover, learning adversarial features required tricks like gradient penalty and learning rate decay.
We believe \alg{} can leverage any representation learning technique to obtain base features and a potential avenue for future work is to leverage pretrained features for more complex tasks to speed up learning.

\section{Discussion}
We introduced \gls{sfm}---a novel non-adversarial method for \gls{irl} that requires no expert action labels.
Our method learns to match the expert's successor features, derived from adaptively learned base features, using direct policy search as opposed to solving a minmax adversarial game. Through experiments on several standard imitation learning benchmarks, we have shown that state-of-the-art imitation is achievable with a non-adversarial approach, thereby providing an affirmative answer to our central research question.

Consequently, \gls{sfm} is no less stable to train than its online \gls{rl} subroutine. This is not the case with adversarial methods, which involve complex game dynamics during training. Much like the rich literature on GANs \citep{goodfellow2014generative,gulrajani2017improved,kodali2018ganconverge}, adversarial \gls{irl} methods often require several tricks to stabilize the optimization, such as gradient penalties, bespoke optimizers, and careful hyperparameter tuning.

Beyond achieving state-of-the-art performance, \gls{sfm} demonstrated an unexpected feat: it is exceptionally robust to the policy optimization subroutine. Notably, when using the weaker TD3 policy optimizer, \gls{sfm} performs almost as well as it does with the relatively stronger TD7 optimizer. This is in stark contrast to the baseline methods, which performed considerably worse under the weaker policy optimizer.
As such, we expect that \gls{sfm} can be broadly useful and easier to deploy on resource-limited systems, which is often a constraint in robotics applications.

Interestingly, \gls{sfm} follows a recent trend in model alignment that foregoes explicit reward modeling for direct policy search. This was famously exemplified in RLHF with DPO \citep{rafailov2024direct} and its subsequent extensions \citep{azar2024general,munos24nash}. It is worth noting that \gls{sfm}, unlike DPO, \emph{does} require modeling state features. However, the state features modeled by \gls{sfm} are \emph{task-agnostic}, and we found in particular that state embeddings for latent dynamics models suffice. We emphasize that this is a reflection of the more complicated dynamics inherent to general \gls{rl} problems, unlike natural language problems which have trivial dynamics.

\gls{sfm} is not the first non-adversarial \gls{irl} method; we note that IQ-Learn \citep{garg2021iqlearn} similarly reduces \gls{irl} to \gls{rl}. However, we showed that \gls{sfm} substantially outperforms IQ-Learn in practice, and more importantly, it does so \emph{without access to expert action labels}. Indeed, to our knowledge, \gls{sfm} is \emph{the first} non-adversarial state-only interactive \gls{irl} method. This opens the door to exciting possibilities, such as imitation learning from video and motion-capture data, which would not be possible for methods that require knowledge of the expert's actions. We believe that the simpler, non-adversarial nature of \gls{sfm} training will be highly useful for scaling to such problems.

\textbf{Limitations} \quad
While SFM is simpler than IRL methods, it still doesn't theoretically alleviate the exploration problem that IRL methods encounter. A promising direction of future work could combine SFM with resets~\citep{swamy2023inverse} or hybrid IRL~\citep{ren2024hybrid} to improve sample efficiency. 
Alternatively, SFM can leverage existing exploration algorithms with certain methods leveraging successor features being particularly amenable \citep[e.g.,][]{machado2020count}.

\section*{Acknowledgements}
The authors would like to thank Lucas Lehnert, Adriana Hugessen, Gokul Swamy, Juntao Ren, Andrea Tirinzoni, and Ahmed Touati for their valuable feedback and discussions. 
The writing of the paper benefited from discussions with Darshan Patil, Mandana Samiei, Matthew Fortier, Zichao Li and anonymous reviewers. 
This work was supported by Fonds de Recherche du Québec, National Sciences and Engineering Research Council of Canada (NSERC), Calcul Québec, Canada CIFAR AI Chair program, and Canada Excellence Research Chairs (CERC) program.
The authors are also grateful to Mila (mila.quebec) IDT and Digital Research Alliance of Canada for computing resources.
Sanjiban Choudhury is supported in part by Google Google Faculty Research Award, OpenAI SuperAlignment Grant, ONR Young Investigator Award, NSF RI \#2312956, and NSF FRR\#2327973.

\bibliography{iclr2025_conference}

\begin{thebibliography}{75}
\providecommand{\natexlab}[1]{#1}
\providecommand{\url}[1]{\texttt{#1}}
\expandafter\ifx\csname urlstyle\endcsname\relax
  \providecommand{\doi}[1]{doi: #1}\else
  \providecommand{\doi}{doi: \begingroup \urlstyle{rm}\Url}\fi

\bibitem[Abbeel \& Ng(2004)Abbeel and Ng]{abbeel2004apprenticeship}
Pieter Abbeel and Andrew~Y Ng.
\newblock Apprenticeship learning via inverse reinforcement learning.
\newblock In \emph{International Conference on Machine Learning ({ICML})},
  2004.

\bibitem[Abdolshah et~al.(2021)Abdolshah, Le, George, Gupta, Rana, and
  Venkatesh]{abdolshah2021new}
Majid Abdolshah, Hung Le, Thommen~Karimpanal George, Sunil Gupta, Santu Rana,
  and Svetha Venkatesh.
\newblock A new representation of successor features for transfer across
  dissimilar environments.
\newblock In \emph{International Conference on Machine Learning ({ICML})},
  2021.

\bibitem[Abdulhai et~al.(2022)Abdulhai, Jaques, and Levine]{abdulhai22basis}
Marwa Abdulhai, Natasha Jaques, and Sergey Levine.
\newblock Basis for intentions: Efficient inverse reinforcement learning using
  past experience.
\newblock \emph{CoRR}, abs/2208.04919, 2022.

\bibitem[Agarwal et~al.(2021)Agarwal, Schwarzer, Castro, Courville, and
  Bellemare]{agarwal2021deep}
Rishabh Agarwal, Max Schwarzer, Pablo~Samuel Castro, Aaron Courville, and
  Marc~G Bellemare.
\newblock Deep reinforcement learning at the edge of the statistical precipice.
\newblock In \emph{Neural Information Processing Systems ({NeurIPS})}, 2021.

\bibitem[Azar et~al.(2024)Azar, Guo, Piot, Munos, Rowland, Valko, and
  Calandriello]{azar2024general}
Mohammad~Gheshlaghi Azar, Zhaohan~Daniel Guo, Bilal Piot, Remi Munos, Mark
  Rowland, Michal Valko, and Daniele Calandriello.
\newblock A general theoretical paradigm to understand learning from human
  preferences.
\newblock In \emph{International Conference on Artificial Intelligence and
  Statistics {(AISTATS)}}, 2024.

\bibitem[Barreto et~al.(2017)Barreto, Dabney, Munos, Hunt, Schaul, van Hasselt,
  and Silver]{barreto2017successor}
Andr{\'e} Barreto, Will Dabney, R{\'e}mi Munos, Jonathan~J Hunt, Tom Schaul,
  Hado~P van Hasselt, and David Silver.
\newblock Successor features for transfer in reinforcement learning.
\newblock In \emph{Neural Information Processing Systems ({NeurIPS})}, 2017.

\bibitem[Barreto et~al.(2018)Barreto, Borsa, Quan, Schaul, Silver, Hessel,
  Mankowitz, Zidek, and Munos]{barreto2018transfer}
Andr\'e Barreto, Diana Borsa, John Quan, Tom Schaul, David Silver, Matteo
  Hessel, Daniel Mankowitz, Augustin Zidek, and R\'emi Munos.
\newblock Transfer in deep reinforcement learning using successor features and
  generalised policy improvement.
\newblock In \emph{International Conference on Machine Learning ({ICML})},
  2018.

\bibitem[Barreto et~al.(2020)Barreto, Hou, Borsa, Silver, and
  Precup]{andre20fast}
André Barreto, Shaobo Hou, Diana Borsa, David Silver, and Doina Precup.
\newblock Fast reinforcement learning with generalized policy updates.
\newblock \emph{Proceedings of the National Academy of Sciences (PNAS)},
  117\penalty0 (48):\penalty0 30079–30087, 2020.

\bibitem[Borsa et~al.(2019)Borsa, Barreto, Quan, Mankowitz, van Hasselt, Munos,
  Silver, and Schaul]{borsa2018universal}
Diana Borsa, Andr\'e Barreto, John Quan, Daniel~J. Mankowitz, Hado van Hasselt,
  R\'emi Munos, David Silver, and Tom Schaul.
\newblock Universal successor features approximators.
\newblock In \emph{International Conference on Learning Representations
  ({ICLR})}, 2019.

\bibitem[Bradbury et~al.(2018)Bradbury, Frostig, Hawkins, Johnson, Leary,
  Maclaurin, Necula, Paszke, Vander{P}las, Wanderman-{M}ilne, and
  Zhang]{jax2018github}
James Bradbury, Roy Frostig, Peter Hawkins, Matthew~James Johnson, Chris Leary,
  Dougal Maclaurin, George Necula, Adam Paszke, Jake Vander{P}las, Skye
  Wanderman-{M}ilne, and Qiao Zhang.
\newblock {JAX}: composable transformations of {P}ython+{N}um{P}y programs,
  2018.
\newblock URL \url{http://github.com/google/jax}.

\bibitem[Bronstein et~al.(2022)Bronstein, Palatucci, Notz, White, Kuefler, Lu,
  Paul, Nikdel, Mougin, Chen, et~al.]{bronstein2022hierarchical}
Eli Bronstein, Mark Palatucci, Dominik Notz, Brandyn White, Alex Kuefler, Yiren
  Lu, Supratik Paul, Payam Nikdel, Paul Mougin, Hongge Chen, et~al.
\newblock Hierarchical model-based imitation learning for planning in
  autonomous driving.
\newblock In \emph{International Conference on Intelligent Robots and Systems
  ({IROS})}, 2022.

\bibitem[Dadashi et~al.(2019)Dadashi, Taiga, Le~Roux, Schuurmans, and
  Bellemare]{dadashi2019value}
Robert Dadashi, Adrien~Ali Taiga, Nicolas Le~Roux, Dale Schuurmans, and Marc~G
  Bellemare.
\newblock The value function polytope in reinforcement learning.
\newblock In \emph{International Conference on Machine Learning ({ICML})},
  2019.

\bibitem[Daskalakis et~al.(2018)Daskalakis, Ilyas, Syrgkanis, and
  Zeng]{daskalakis2017training}
Constantinos Daskalakis, Andrew Ilyas, Vasilis Syrgkanis, and Haoyang Zeng.
\newblock Training gans with optimism.
\newblock In \emph{International Conference on Learning Representations
  ({ICLR})}, 2018.

\bibitem[Dayan(1993)]{dayan1993improving}
Peter Dayan.
\newblock Improving generalization for temporal difference learning: The
  successor representation.
\newblock \emph{{Neural Computation}}, 5\penalty0 (4):\penalty0 613--624, 1993.

\bibitem[Degris et~al.(2012)Degris, White, and Sutton]{degris12offpolicy}
Thomas Degris, Martha White, and Richard~S. Sutton.
\newblock Off-policy actor critic.
\newblock In \emph{International Conference on Machine Learning (ICML)}, 2012.

\bibitem[Farebrother et~al.(2023)Farebrother, Greaves, Agarwal, Lan, Goroshin,
  Castro, and Bellemare]{farebrother2023proto}
Jesse Farebrother, Joshua Greaves, Rishabh Agarwal, Charline~Le Lan, Ross
  Goroshin, Pablo~Samuel Castro, and Marc~G Bellemare.
\newblock Proto-value networks: Scaling representation learning with auxiliary
  tasks.
\newblock In \emph{International Conference on Learning Representations
  ({ICLR})}, 2023.

\bibitem[Filos et~al.(2021)Filos, Lyle, Gal, Levine, Jaques, and
  Farquhar]{filos21psiphi}
Angelos Filos, Clare Lyle, Yarin Gal, Sergey Levine, Natasha Jaques, and
  Gregory Farquhar.
\newblock {PsiPhi}-learning: Reinforcement learning with demonstrations using
  successor features and inverse temporal difference learning.
\newblock In \emph{International Conference on Machine Learning ({ICML})},
  2021.

\bibitem[Fu et~al.(2018)Fu, Luo, and Levine]{fu2018learning}
Justin Fu, Katie Luo, and Sergey Levine.
\newblock Learning robust rewards with adverserial inverse reinforcement
  learning.
\newblock In \emph{International Conference on Learning Representations
  ({ICLR})}, 2018.

\bibitem[Fujimoto et~al.(2018)Fujimoto, van Hoof, and Meger]{fujimoto2018td3}
Scott Fujimoto, Herke van Hoof, and David Meger.
\newblock Addressing function approximation error in actor-critic methods.
\newblock In \emph{International Conference on Machine Learning ({ICML})},
  2018.

\bibitem[Fujimoto et~al.(2020)Fujimoto, Meger, and
  Precup]{fujimoto2020equivalence}
Scott Fujimoto, David Meger, and Doina Precup.
\newblock An equivalence between loss functions and non-uniform sampling in
  experience replay.
\newblock In \emph{Neural Information Processing Systems ({NeurIPS})}, 2020.

\bibitem[Fujimoto et~al.(2023)Fujimoto, Chang, Smith, Gu, Precup, and
  Meger]{fujimoto2023for}
Scott Fujimoto, Wei-Di Chang, Edward~J. Smith, Shixiang~Shane Gu, Doina Precup,
  and David Meger.
\newblock For {SALE}: State-action representation learning for deep
  reinforcement learning.
\newblock In \emph{Neural Information Processing Systems ({NeurIPS})}, 2023.

\bibitem[Gangwani \& Peng(2020)Gangwani and Peng]{gangwani2020state}
Tanmay Gangwani and Jian Peng.
\newblock State-only imitation with transition dynamics mismatch.
\newblock In \emph{International Conference on Learning Representations
  ({ICLR})}, 2020.

\bibitem[Garg et~al.(2021)Garg, Chakraborty, Cundy, Song, and
  Ermon]{garg2021iqlearn}
Divyansh Garg, Shuvam Chakraborty, Chris Cundy, Jiaming Song, and Stefano
  Ermon.
\newblock {IQ}-learn: Inverse soft-q learning for imitation.
\newblock In \emph{Neural Information Processing Systems ({NeurIPS})}, 2021.

\bibitem[Ghosh et~al.(2023)Ghosh, Bhateja, and Levine]{ghosh23icvf}
Dibya Ghosh, Chethan~Anand Bhateja, and Sergey Levine.
\newblock Reinforcement learning from passive data via latent intentions.
\newblock In \emph{International Conference on Machine Learning ({ICML})},
  2023.

\bibitem[Goodfellow et~al.(2014)Goodfellow, Pouget-Abadie, Mirza, Xu,
  Warde-Farley, Ozair, Courville, and Bengio]{goodfellow2014generative}
Ian Goodfellow, Jean Pouget-Abadie, Mehdi Mirza, Bing Xu, David Warde-Farley,
  Sherjil Ozair, Aaron Courville, and Yoshua Bengio.
\newblock Generative adversarial nets.
\newblock In \emph{Neural Information Processing Systems ({NeurIPS})}, 2014.

\bibitem[Gulrajani et~al.(2017)Gulrajani, Ahmed, Arjovsky, Dumoulin, and
  Courville]{gulrajani2017improved}
Ishaan Gulrajani, Faruk Ahmed, Martin Arjovsky, Vincent Dumoulin, and Aaron~C
  Courville.
\newblock Improved training of wasserstein gans.
\newblock In \emph{Neural Information Processing Systems ({NeurIPS})}, 2017.

\bibitem[Haarnoja et~al.(2018)Haarnoja, Zhou, Abbeel, and
  Levine]{haarnoja2018soft}
Tuomas Haarnoja, Aurick Zhou, Pieter Abbeel, and Sergey Levine.
\newblock Soft actor-critic: Off-policy maximum entropy deep reinforcement
  learning with a stochastic actor.
\newblock In \emph{International Conference on Machine Learning ({ICML})},
  2018.

\bibitem[Ho \& Ermon(2016)Ho and Ermon]{ho2016generative}
Jonathan Ho and Stefano Ermon.
\newblock Generative adversarial imitation learning.
\newblock In \emph{Neural Information Processing Systems ({NeurIPS})}, 2016.

\bibitem[Igl et~al.(2022)Igl, Kim, Kuefler, Mougin, Shah, Shiarlis, Anguelov,
  Palatucci, White, and Whiteson]{igl2205symphony}
Maximilian Igl, Daewoo Kim, Alex Kuefler, Paul Mougin, Punit Shah, Kyriacos
  Shiarlis, Dragomir Anguelov, Mark Palatucci, Brandyn White, and Shimon
  Whiteson.
\newblock Symphony: Learning realistic and diverse agents for autonomous
  driving simulation.
\newblock In \emph{International Conference on Robotics and Automation
  ({ICRA})}, 2022.

\bibitem[Jain et~al.(2023)Jain, Lehnert, Rish, and Berseth]{jain2023maximum}
Arnav~Kumar Jain, Lucas Lehnert, Irina Rish, and Glen Berseth.
\newblock Maximum state entropy exploration using predecessor and successor
  representations.
\newblock In \emph{Neural Information Processing Systems ({NeurIPS})}, 2023.

\bibitem[Jena et~al.(2020)Jena, Agrawal, and Sycara]{jena2020addressing}
Rohit Jena, Siddharth Agrawal, and Katia Sycara.
\newblock Addressing reward bias in adversarial imitation learning with neutral
  reward functions.
\newblock \emph{CoRR}, abs/2009.09467, 2020.

\bibitem[Kingma \& Welling(2014)Kingma and Welling]{kingma2013auto}
Diederik~P Kingma and Max Welling.
\newblock Auto-encoding variational bayes.
\newblock In \emph{International Conference on Learning Representations
  ({ICLR})}, 2014.

\bibitem[Kodali et~al.(2018)Kodali, Hays, Abernethy, and
  Kira]{kodali2018ganconverge}
Naveen Kodali, James Hays, Jacob Abernethy, and Zsolt Kira.
\newblock On convergence and stability of {GAN}s.
\newblock In \emph{International Conference on Learning Representations
  ({ICLR})}, 2018.

\bibitem[Kostrikov et~al.(2022)Kostrikov, Nair, and
  Levine]{kostrikov2021offline}
Ilya Kostrikov, Ashvin Nair, and Sergey Levine.
\newblock Offline reinforcement learning with implicit q-learning.
\newblock In \emph{International Conference on Learning Representations
  ({ICLR})}, 2022.

\bibitem[Le~Lan et~al.(2022)Le~Lan, Tu, Oberman, Agarwal, and
  Bellemare]{lelan22generalization}
Charline Le~Lan, Stephen Tu, Adam Oberman, Rishabh Agarwal, and Marc~G.
  Bellemare.
\newblock On the generalization of representations in reinforcement learning.
\newblock In \emph{International Conference on Artificial Intelligence and
  Statistics (AISTATS)}, 2022.

\bibitem[Le~Lan et~al.(2023{\natexlab{a}})Le~Lan, Greaves, Farebrother,
  Rowland, Pedregosa, Agarwal, and Bellemare]{lelan23subspace}
Charline Le~Lan, Joshua Greaves, Jesse Farebrother, Mark Rowland, Fabian
  Pedregosa, Rishabh Agarwal, and Marc~G. Bellemare.
\newblock A novel stochastic gradient descent algorithm for learning principal
  subspaces.
\newblock In \emph{International Conference on Artificial Intelligence and
  Statistics (AISTATS)}, 2023{\natexlab{a}}.

\bibitem[Le~Lan et~al.(2023{\natexlab{b}})Le~Lan, Tu, Rowland, Harutyunyan,
  Agarwal, Bellemare, and Dabney]{lelan23bootstrap}
Charline Le~Lan, Stephen Tu, Mark Rowland, Anna Harutyunyan, Rishabh Agarwal,
  Marc~G. Bellemare, and Will Dabney.
\newblock Bootstrapped representations in reinforcement learning.
\newblock In \emph{International Conference on Machine Learning (ICML)},
  2023{\natexlab{b}}.

\bibitem[Lee et~al.(2019)Lee, Srinivasan, and Doshi-Velez]{lee2019truly}
Donghun Lee, Srivatsan Srinivasan, and Finale Doshi-Velez.
\newblock Truly batch apprenticeship learning with deep successor features.
\newblock In \emph{International Joint Conference on Artificial Intelligence
  ({IJCAI})}, 2019.

\bibitem[Lehnert et~al.(2017)Lehnert, Tellex, and
  Littman]{lehnert2017advantages}
Lucas Lehnert, Stefanie Tellex, and Michael~L Littman.
\newblock Advantages and limitations of using successor features for transfer
  in reinforcement learning.
\newblock \emph{CoRR}, abs/1708.00102, 2017.

\bibitem[Machado et~al.(2020)Machado, Bellemare, and Bowling]{machado2020count}
Marlos~C Machado, Marc~G Bellemare, and Michael Bowling.
\newblock Count-based exploration with the successor representation.
\newblock In \emph{AAAI Conference on Artificial Intelligence}, 2020.

\bibitem[Mnih et~al.(2015)Mnih, Kavukcuoglu, Silver, Rusu, Veness, Bellemare,
  Graves, Riedmiller, Fidjeland, Ostrovski, Petersen, Beattie, Sadik,
  Antonoglou, King, Kumaran, Wierstra, Legg, and Hassabis]{mnih2015human}
Volodymyr Mnih, Koray Kavukcuoglu, David Silver, Andrei~A. Rusu, Joel Veness,
  Marc~G. Bellemare, Alex Graves, Martin Riedmiller, Andreas~K. Fidjeland,
  Georg Ostrovski, Stig Petersen, Charles Beattie, Amir Sadik, Ioannis
  Antonoglou, Helen King, Dharshan Kumaran, Daan Wierstra, Shane Legg, and
  Demis Hassabis.
\newblock Human-level control through deep reinforcement learning.
\newblock \emph{Nature}, 518\penalty0 (7540):\penalty0 529–533, 2015.

\bibitem[Munos et~al.(2024)Munos, Valko, Calandriello, Azar, Rowland, Guo,
  Tang, Geist, Mesnard, Fiegel, Michi, Selvi, Girgin, Momchev, Bachem,
  Mankowitz, Precup, and Piot]{munos24nash}
R{\'{e}}mi Munos, Michal Valko, Daniele Calandriello, Mohammad~Gheshlaghi Azar,
  Mark Rowland, Zhaohan~Daniel Guo, Yunhao Tang, Matthieu Geist, Thomas
  Mesnard, C{\^{o}}me Fiegel, Andrea Michi, Marco Selvi, Sertan Girgin, Nikola
  Momchev, Olivier Bachem, Daniel~J. Mankowitz, Doina Precup, and Bilal Piot.
\newblock Nash learning from human feedback.
\newblock In \emph{International Conference on Machine Learning ({ICML})},
  2024.

\bibitem[Newey \& Powell(1987)Newey and Powell]{newey1987asymmetric}
Whitney~K Newey and James~L Powell.
\newblock Asymmetric least squares estimation and testing.
\newblock \emph{Econometrica: Journal of the Econometric Society}, pp.\
  819--847, 1987.

\bibitem[Ng et~al.(2000)Ng, Russell, et~al.]{ng2000algorithms}
Andrew~Y Ng, Stuart Russell, et~al.
\newblock Algorithms for inverse reinforcement learning.
\newblock In \emph{International Conference on Machine Learning ({ICML})},
  2000.

\bibitem[Park et~al.(2024)Park, Kreiman, and Levine]{park2024foundation}
Seohong Park, Tobias Kreiman, and Sergey Levine.
\newblock Foundation policies with hilbert representations.
\newblock In \emph{International Conference on Machine Learning ({ICML})},
  2024.

\bibitem[Pathak et~al.(2017)Pathak, Agrawal, Efros, and
  Darrell]{pathak2017curiosity}
Deepak Pathak, Pulkit Agrawal, Alexei~A Efros, and Trevor Darrell.
\newblock Curiosity-driven exploration by self-supervised prediction.
\newblock In \emph{International Conference on Machine Learning ({ICML})},
  2017.

\bibitem[Piot et~al.(2016)Piot, Geist, and Pietquin]{piot2016bridging}
Bilal Piot, Matthieu Geist, and Olivier Pietquin.
\newblock Bridging the gap between imitation learning and inverse reinforcement
  learning.
\newblock \emph{IEEE transactions on neural networks and learning systems},
  28\penalty0 (8):\penalty0 1814--1826, 2016.

\bibitem[Pirotta et~al.(2024)Pirotta, Tirinzoni, Touati, Lazaric, and
  Ollivier]{pirotta2024fast}
Matteo Pirotta, Andrea Tirinzoni, Ahmed Touati, Alessandro Lazaric, and Yann
  Ollivier.
\newblock Fast imitation via behavior foundation models.
\newblock In \emph{International Conference on Learning Representations
  ({ICLR})}, 2024.

\bibitem[Pomerleau(1988)]{pomerleau1988alvinn}
Dean~A Pomerleau.
\newblock Alvinn: An autonomous land vehicle in a neural network.
\newblock In \emph{Neural Information Processing Systems ({NeurIPS})}, 1988.

\bibitem[Rafailov et~al.(2024)Rafailov, Sharma, Mitchell, Manning, Ermon, and
  Finn]{rafailov2024direct}
Rafael Rafailov, Archit Sharma, Eric Mitchell, Christopher~D Manning, Stefano
  Ermon, and Chelsea Finn.
\newblock Direct preference optimization: Your language model is secretly a
  reward model.
\newblock \emph{Neural Information Processing Systems ({NeurIPS})}, 2024.

\bibitem[Ren et~al.(2024)Ren, Swamy, Wu, Bagnell, and Choudhury]{ren2024hybrid}
Juntao Ren, Gokul Swamy, Zhiwei~Steven Wu, J~Andrew Bagnell, and Sanjiban
  Choudhury.
\newblock Hybrid inverse reinforcement learning.
\newblock In \emph{International Conference on Machine Learning ({ICML})},
  2024.

\bibitem[Ross \& Bagnell(2010)Ross and Bagnell]{ross2010efficient}
St{\'e}phane Ross and Drew Bagnell.
\newblock Efficient reductions for imitation learning.
\newblock In \emph{International Conference on Artificial Intelligence and
  Statistics {(AISTATS)}}, 2010.

\bibitem[Schulman et~al.(2015)Schulman, Levine, Abbeel, Jordan, and
  Moritz]{schulman2015trust}
John Schulman, Sergey Levine, Pieter Abbeel, Michael~I. Jordan, and Philipp
  Moritz.
\newblock Trust region policy optimization.
\newblock In \emph{International Conference on Machine Learning ({ICML})},
  2015.

\bibitem[Silver et~al.(2014)Silver, Lever, Heess, Degris, Wierstra, and
  Riedmiller]{silver2014deterministic}
David Silver, Guy Lever, Nicolas Heess, Thomas Degris, Daan Wierstra, and
  Martin Riedmiller.
\newblock Deterministic policy gradient algorithms.
\newblock In \emph{International Conference on Machine Learning ({ICML})},
  2014.

\bibitem[Silver et~al.(2016)Silver, Huang, Maddison, Guez, Sifre, van~den
  Driessche, Schrittwieser, Antonoglou, Panneershelvam, Lanctot, Dieleman,
  Grewe, Nham, Kalchbrenner, Sutskever, Lillicrap, Leach, Kavukcuoglu, Graepel,
  and Hassabis]{silver2016mastering}
David Silver, Aja Huang, Chris~J. Maddison, Arthur Guez, Laurent Sifre, George
  van~den Driessche, Julian Schrittwieser, Ioannis Antonoglou, Vedavyas
  Panneershelvam, Marc Lanctot, Sander Dieleman, Dominik Grewe, John Nham, Nal
  Kalchbrenner, Ilya Sutskever, Timothy~P. Lillicrap, Madeleine Leach, Koray
  Kavukcuoglu, Thore Graepel, and Demis Hassabis.
\newblock Mastering the game of go with deep neural networks and tree search.
\newblock \emph{Nature}, 529\penalty0 (7587):\penalty0 484–489, 2016.

\bibitem[Sun et~al.(2019)Sun, Vemula, Boots, and Bagnell]{pmlr-v97-sun19b}
Wen Sun, Anirudh Vemula, Byron Boots, and Drew Bagnell.
\newblock Provably efficient imitation learning from observation alone.
\newblock In \emph{International Conference on Machine Learning ({ICML})},
  2019.

\bibitem[Sutton(1988)]{sutton88td}
Richard~S. Sutton.
\newblock Learning to predict by the methods of temporal differences.
\newblock \emph{Machine Learning}, 3:\penalty0 9--44, 1988.

\bibitem[Sutton \& Barto(2018)Sutton and Barto]{sutton2018reinforcement}
Richard~S Sutton and Andrew~G Barto.
\newblock \emph{Reinforcement learning: An introduction}.
\newblock MIT press, 2018.

\bibitem[Sutton et~al.(1999)Sutton, McAllester, Singh, and Mansour]{sutton99pg}
Richard~S. Sutton, David~A. McAllester, Satinder Singh, and Yishay Mansour.
\newblock Policy gradient methods for reinforcement learning with function
  approximation.
\newblock In \emph{Neural Information Processing Systems (NeurIPS)}, 1999.

\bibitem[Swamy et~al.(2021)Swamy, Choudhury, Bagnell, and Wu]{swamy2021moments}
Gokul Swamy, Sanjiban Choudhury, J~Andrew Bagnell, and Steven Wu.
\newblock Of moments and matching: A game-theoretic framework for closing the
  imitation gap.
\newblock In \emph{International Conference on Machine Learning ({ICML})},
  2021.

\bibitem[Swamy et~al.(2022)Swamy, Rajaraman, Peng, Choudhury, Bagnell, Wu,
  Jiao, and Ramchandran]{swamy2022minimax}
Gokul Swamy, Nived Rajaraman, Matt Peng, Sanjiban Choudhury, Drew Bagnell,
  Steven Wu, Jiantao Jiao, and Kannan Ramchandran.
\newblock Minimax optimal online imitation learning via replay estimation.
\newblock In \emph{Neural Information Processing Systems ({NeurIPS})}, 2022.

\bibitem[Swamy et~al.(2023)Swamy, Wu, Choudhury, Bagnell, and
  Wu]{swamy2023inverse}
Gokul Swamy, David Wu, Sanjiban Choudhury, Drew Bagnell, and Steven Wu.
\newblock Inverse reinforcement learning without reinforcement learning.
\newblock In \emph{International Conference on Machine Learning ({ICML})},
  2023.

\bibitem[Syed \& Schapire(2007)Syed and Schapire]{syed2007game}
Umar Syed and Robert~E Schapire.
\newblock A game-theoretic approach to apprenticeship learning.
\newblock In \emph{Neural Information Processing Systems ({NeurIPS})}, 2007.

\bibitem[Syed et~al.(2008)Syed, Bowling, and Schapire]{syed2008apprenticeship}
Umar Syed, Michael Bowling, and Robert~E Schapire.
\newblock Apprenticeship learning using linear programming.
\newblock In \emph{International Conference on Machine Learning ({ICML})},
  2008.

\bibitem[Torabi et~al.(2018)Torabi, Warnell, and Stone]{torabi2018generative}
Faraz Torabi, Garrett Warnell, and Peter Stone.
\newblock Generative adversarial imitation from observation.
\newblock \emph{CoRR}, abs/1807.06158, 2018.

\bibitem[Torabi et~al.(2019)Torabi, Warnell, and Stone]{torabi2019recent}
Faraz Torabi, Garrett Warnell, and Peter Stone.
\newblock Recent advances in imitation learning from observation.
\newblock In \emph{International Joint Conference on Artificial Intelligence
  ({IJCAI})}, 2019.

\bibitem[Touati \& Ollivier(2021)Touati and Ollivier]{touati2021learning}
Ahmed Touati and Yann Ollivier.
\newblock Learning one representation to optimize all rewards.
\newblock In \emph{Neural Information Processing Systems ({NeurIPS})}, 2021.

\bibitem[Touati et~al.(2023)Touati, Rapin, and Ollivier]{touati2023does}
Ahmed Touati, J{\'e}r{\'e}my Rapin, and Yann Ollivier.
\newblock Does zero-shot reinforcement learning exist?
\newblock In \emph{International Conference on Learning Representations
  ({ICLR})}, 2023.

\bibitem[Tunyasuvunakool et~al.(2020)Tunyasuvunakool, Muldal, Doron, Liu,
  Bohez, Merel, Erez, Lillicrap, Heess, and Tassa]{tassa2018deepmind}
Saran Tunyasuvunakool, Alistair Muldal, Yotam Doron, Siqi Liu, Steven Bohez,
  Josh Merel, Tom Erez, Timothy Lillicrap, Nicolas Heess, and Yuval Tassa.
\newblock dm\_control: Software and tasks for continuous control.
\newblock \emph{Software Impacts}, 6:\penalty0 100022, 2020.

\bibitem[Vinitsky et~al.(2022)Vinitsky, Lichtl{\'e}, Yang, Amos, and
  Foerster]{vinitsky2022nocturne}
Eugene Vinitsky, Nathan Lichtl{\'e}, Xiaomeng Yang, Brandon Amos, and Jakob
  Foerster.
\newblock Nocturne: a scalable driving benchmark for bringing multi-agent
  learning one step closer to the real world.
\newblock In \emph{Neural Information Processing Systems ({NeurIPS})}, 2022.

\bibitem[Wiltzer et~al.(2024{\natexlab{a}})Wiltzer, Farebrother, Gretton, and
  Rowland]{wiltzer2024mvdrl}
Harley Wiltzer, Jesse Farebrother, Arthur Gretton, and Mark Rowland.
\newblock Foundations of multivariate distributional reinforcement learning.
\newblock In \emph{Neural Information Processing Systems (NeurIPS)},
  2024{\natexlab{a}}.

\bibitem[Wiltzer et~al.(2024{\natexlab{b}})Wiltzer, Farebrother, Gretton, Tang,
  Barreto, Dabney, Bellemare, and Rowland]{wiltzer24dsm}
Harley Wiltzer, Jesse Farebrother, Arthur Gretton, Yunhao Tang, André Barreto,
  Will Dabney, Marc~G. Bellemare, and Mark Rowland.
\newblock A distributional analogue to the successor representation.
\newblock In \emph{International Conference on Machine Learning (ICML)},
  2024{\natexlab{b}}.

\bibitem[Zhang et~al.(2017)Zhang, Springenberg, Boedecker, and
  Burgard]{zhang2017deep}
Jingwei Zhang, Jost~Tobias Springenberg, Joschka Boedecker, and Wolfram
  Burgard.
\newblock Deep reinforcement learning with successor features for navigation
  across similar environments.
\newblock In \emph{International Conference on Intelligent Robots and Systems
  ({IROS})}, 2017.

\bibitem[Zhu et~al.(2020)Zhu, Lin, Dai, and Zhou]{zhu2020off}
Zhuangdi Zhu, Kaixiang Lin, Bo~Dai, and Jiayu Zhou.
\newblock Off-policy imitation learning from observations.
\newblock In \emph{Neural Information Processing Systems ({NeurIPS})}, 2020.

\bibitem[Ziebart et~al.(2008)Ziebart, Maas, Bagnell, Dey,
  et~al.]{ziebart2008maximum}
Brian~D Ziebart, Andrew~L Maas, J~Andrew Bagnell, Anind~K Dey, et~al.
\newblock Maximum entropy inverse reinforcement learning.
\newblock In \emph{AAAI Conference on Artificial Intelligence}, 2008.

\end{thebibliography}
\bibliographystyle{iclr2025_conference}

\clearpage

\appendix

\section{Proofs}
\label{app:theory}

Before proving Proposition \ref{prop:propsf_stochastic}, we begin by proving some helpful lemmas. First, we present a simple generalization of a result from \citet{garg2021iqlearn}.

\begin{lemma}
\label{lemma:general-telescoping}
Let $\mu$ denote any discounted state-action occupancy measure for an MDP with state space $\mathcal{S}$ and initial state distribution $P_0$, and let $\mathcal{V}$ denote a vector space. Then for any $f:\mathcal{S}\to\mathcal{V}$, the following holds,
\begin{align*}
    \mathbb{E}_{(S, A)\sim\mu}\left[f(S) - \gamma\mathbb{E}_{S'\sim P(\cdot\mid S, A)}[f(S')]\right]
    = (1-\gamma)\mathbb{E}_{S\sim P_0}\left[f(S)\right].
\end{align*}
\end{lemma}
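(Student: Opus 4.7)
The plan is to expand the expectation under the discounted state-action occupancy measure $\mu$ into a discounted sum over the trajectory distribution, and then recognize a telescoping cancellation. Concretely, since $\mu$ is a discounted state-action occupancy, there exists some stationary policy $\pi$ under which
\[
\mathbb{E}_{(S,A)\sim\mu}[g(S,A)] = (1-\gamma)\sum_{t=0}^\infty \gamma^t \,\mathbb{E}_\pi\bigl[g(S_t, A_t)\bigr]
\]
for all integrable $g$, where $(S_t, A_t)_{t\geq 0}$ is drawn from $\trajdist$ with $S_0\sim P_0$. Applying this with $g(s,a) = f(s) - \gamma \,\mathbb{E}_{S'\sim P(\cdot\mid s,a)}[f(S')]$ and invoking the Markov property (so that the inner conditional expectation coincides with $\mathbb{E}_\pi[f(S_{t+1})\mid S_t, A_t]$, which integrates to $\mathbb{E}_\pi[f(S_{t+1})]$ under the tower property) reduces the left-hand side to $(1-\gamma)\sum_{t=0}^\infty \gamma^t\bigl(\mathbb{E}_\pi[f(S_t)] - \gamma\,\mathbb{E}_\pi[f(S_{t+1})]\bigr)$.

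The second step is to perform the telescoping cancellation: re-index the second sum by shifting $t \mapsto t-1$ to obtain $(1-\gamma)\bigl(\sum_{t\geq 0}\gamma^t\mathbb{E}_\pi[f(S_t)] - \sum_{t\geq 1}\gamma^t\mathbb{E}_\pi[f(S_t)]\bigr) = (1-\gamma)\,\mathbb{E}_\pi[f(S_0)]$. Since $S_0\sim P_0$ by construction, this equals $(1-\gamma)\,\mathbb{E}_{S\sim P_0}[f(S)]$, which is exactly the right-hand side.

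The only mildly delicate point is that $f$ takes values in a general vector space $\mathcal{V}$, so strictly speaking one should either interpret the expectations componentwise (if $\mathcal{V}$ is finite-dimensional, which is the only case used in the paper via $\mathcal{V}=\mathbb{R}^d$) or as Bochner integrals in a Banach-space setting; in either case linearity of expectation and dominated convergence (justifying the interchange of the infinite sum and the expectation, using $\gamma<1$ together with integrability of $f$ along trajectories) go through unchanged. I do not expect this to be a substantive obstacle; the whole argument is really just a telescoping identity dressed up via the definition of the occupancy measure, and the main thing to be careful about is being explicit that the result depends only on the defining property of $\mu$ as a discounted occupancy and the Markov dynamics, not on any specific choice of reward or policy.
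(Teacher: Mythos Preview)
Your proposal is correct and follows essentially the same route as the paper: both identify $\mu$ with the discounted occupancy of some policy, expand the left-hand side as $(1-\gamma)\sum_{t\geq 0}\gamma^t(\mathbb{E}[f(S_t)]-\gamma\mathbb{E}[f(S_{t+1})])$ via the Markov property, and then telescope after the shift $t\mapsto t-1$. Your added remark about interpreting the $\mathcal{V}$-valued expectations componentwise (or as Bochner integrals) is a nice touch the paper leaves implicit.
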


\begin{proof}
    Firstly, any discounted state-action occupancy measure $\mu$ is identified with a unique policy $\pi^\mu$ as shown by \citet{ho2016generative}.
    So, $\mu$ is characterized by

    \begin{align*}
        \mu(\mathrm{d}s\mathrm{d}a)
        &= (1-\gamma)\pi^\mu(\mathrm{d}a\mid s)\sum_{t=0}^\infty\gamma^t p^\mu_t(\mathrm{d}s),
    \end{align*}

    where $p^\mu_t(S) = \Pr_{\pi^\mu}(S_t\in S)$ is the state-marginal distribution under policy $\pi^\mu$ at timestep $t$.
    Expanding the LHS of the proposed identity yields

    \begin{align*}
        &\mathbb{E}_{(S, A)\sim\mu}\left[f(S) - (1-\gamma)\gamma\mathbb{E}_{S'\sim P(\cdot\mid S, A)}[f(S')]\right]\\
        &= (1-\gamma)\sum_{t=0}^\infty\gamma^t\mathbb{E}_{S\sim p^\mu_t}[f(S)] - \gamma\mathbb{E}_{(S, A)\sim\mu}\mathbb{E}_{S'\sim P(\cdot\mid S, A)}[f(S')]\\
        &= (1-\gamma)\sum_{t=0}^\infty\gamma^t\mathbb{E}_{S\sim p^\mu_t}[f(S)] - (1-\gamma)\sum_{t=0}^\infty\gamma^{t+1}\mathbb{E}_{S\sim p_t^\mu}\mathbb{E}_{A\sim\pi^\mu(\cdot\mid S)}\mathbb{E}_{S'\sim P(\cdot\mid S, A)}[f(S')]\\
        &= (1-\gamma)\sum_{t=0}^\infty\gamma^t\mathbb{E}_{S\sim p^\mu_t}[f(S)] - (1-\gamma)\sum_{t=0}^\infty\gamma^{t+1}\mathbb{E}_{S\sim p_{t+1}^\mu}[f(S)]\\
        &= (1-\gamma)\sum_{t=0}^\infty\gamma^t\mathbb{E}_{S\sim p^\mu_t}[f(S)]
        - (1-\gamma)\sum_{t=1}^\infty\gamma^{t}\mathbb{E}_{S\sim p^\mu_{t}}[f(S)]\\
        &= (1-\gamma)\mathbb{E}_{S\sim P_0}[f(S)],
    \end{align*}

    since $p^\mu_0 = P_0$ (the initial state distribution) for any $\mu$.
\end{proof}

Intuitively, we will invoke Lemma \ref{lemma:general-telescoping} with $f$ denoting the successor features to derive an expression for the initial state successor features via state transitions sampled from a replay buffer.

\propsfstochastic*
\begin{proof}
    Suppose $\mathcal{B}$ contains rollouts from policies $\{\pi_k\}_{k=1}^{N}$ for some $N\in\mathbb{N}$. Each of these policies $\pi_k$ induces a discounted state-action occupancy measure $\mu_k$. Since the space of all discounted state-action occupancy measures is convex \citep{dadashi2019value}, it follows that $\mu = \frac{1}{N}\sum_{k=1}^N\mu_k$ is itself a discounted state-action occupancy measure.
    
    Consider the function $f:\mathcal{S}\to\mathbb{R}^d$ given by $f(s) = \mathbb{E}_{A\sim\pi(\cdot\mid s)}\sfsa(s, A)$. We have

    \begin{align*}
        &\mathbb{E}_{(S, S')\sim\mathcal{B}}[f(S) - \gamma f(S')]\\
        &= \mathbb{E}_{k\sim\mathsf{Uniform}(\{1,\dots,N\})}\mathbb{E}_{(S, A)\sim\mu_k, S' \sim P(\cdot\mid S, A)}[f(S) - \gamma f(S')]\\
        &= \mathbb{E}_{(S, A)\sim\mu, S' \sim P(\cdot\mid S, A)}[f(S) - \gamma f(S')]\\
        &=\mathbb{E}_{(S, A)\sim\mu}\left[f(S) - \gamma\mathbb{E}_{S'\sim P(\cdot\mid S, A)}[f(S')]\right]\\
        &= (1-\gamma)\mathbb{E}_{S\sim P_0}[f(S)],
    \end{align*}

    where the final step invokes Lemma \ref{lemma:general-telescoping}, which is applicable since $\mu$ is a discounted state-action occupancy measure. The claim then follows by substituting $f(S)$ for $\mathbb{E}_{A\sim\pi(\cdot\mid S)}\sfsa(S, A)$ and multipying through by $(1-\gamma)^{-1}$.
\end{proof}

\propactor*
\begin{proof}
That $\nabla_\policyparam U(\pi_\policyparam; \witnessrew[\pi_\policyparam])$ is the direction that most steeply increases the return under the reward function $\witnessrew[\pi_\policyparam]$ is established in \citet{degris12offpolicy}.
It remains to derive an expression for $\nabla_\policyparam U(\pi_\policyparam; \witnessrew[\pi_\policyparam])$.
Since $\witnessrew[\pi_\policyparam]$ is linear in the base features, given an estimate $\sfsa[]_\theta$ of the successor features for policy $\pi_\policyparam$, the action-value function is given by

\begin{align*}
    Q_\theta(s, a) &= \sfsa[]_\theta(s, a)^\top\witness[\pi_\policyparam].
\end{align*}

Then, we have that

\begin{equation}
    \label{eq:off-policy-gradient:abstract}
    \begin{aligned}
        \nabla_\policyparam U(\pi_\policyparam; \witnessrew[\pi_\policyparam])
        &= \mathbb{E}_{S\sim\occup[\beta]}\nabla_\policyparam\int_{\mathcal{A}}\pi_\policyparam(a\mid S)Q_\theta(S, a)\\
        &= \mathbb{E}_{S\sim\occup[\beta]}\nabla_\mu\int_{\mathcal{A}}\pi_\policyparam(a\mid S)\sfsa[]_\theta(S, a)^\top\witness[\pi_\policyparam].
    \end{aligned}
\end{equation}

When $\pi_\policyparam$ is stochastic, the log-derivative trick yields
\begin{align*}
    \nabla_\policyparam U(\pi_\policyparam; \witnessrew[\pi_\policyparam])
    &= \mathbb{E}_{S\sim\occup[\beta]}\mathbb{E}_{A\sim\pi(\cdot\mid S)}\left[\nabla_\policyparam\log\pi_\policyparam(A\mid S)\sfsa[]_\theta(S, A)^\top\witness[\pi_\policyparam]\right]\\
    &= (\witness[\pi_\policyparam])^\top\left(\mathbb{E}_{S\sim\occup[\beta]}\mathbb{E}_{A\sim\pi(\cdot\mid S)}\left[\nabla_\policyparam\log\pi_\policyparam(A\mid S)\sfsa[]_\theta(S, A)\right]\right).
\end{align*}

Alternatively, for deterministic policies $\pi_\mu$ (where, with notational abuse, we write $\pi_\policyparam(s)\in\mathcal{A}$), the deterministic policy gradient theorem \citep{silver2014deterministic} gives

\begin{align*}
    \nabla_\policyparam U(\pi_\policyparam; \witnessrew[\pi_\policyparam])
    &= \underset{S\sim\occup[\beta]}{\mathbb{E}}\left[\nabla_\policyparam\pi_\policyparam(S)\nabla_a [\sfsa[]_\theta(S, a)^\top\witness[\pi_\policyparam]]\lvert_{a=\pi_\policyparam(S)}\right]\\
    &= \underset{S\sim\occup[\beta]}{\mathbb{E}}\left[\nabla_\policyparam\pi_\policyparam(S)\nabla_a \sfsa[]_\theta(S, a)^\top\lvert_{a=\pi_\policyparam(S)}\witness[\pi_\policyparam]\right]\\
    &= (\witness[\pi_\policyparam])^\top\left(\underset{S\sim\occup[\beta]}{\mathbb{E}}\left[\nabla_\policyparam\pi_\policyparam(S)\nabla_a \sfsa[]_\theta(S, a)^\top\lvert_{a=\pi_\policyparam(S)}\witness[\pi_\policyparam]\right]\right).
\end{align*}

as claimed.

\end{proof}

\section{Extended Results}
In this section, we provide the tables with average returns across tasks from DMControl Suite (Table \ref{tab:numbers_td7}, \ref{tab:numbers_td3} \& \ref{tab:numbers_features}) and per-environment training runs for our study with weak policy optimizers and base feature functions~((Fig. \ref{fig:results_returns_td3} \& \ref{fig:results_returns_features}).

\begin{table*}[hbt!]
	\centering
    \tiny
		\begin{tabular}{l|cc|cccc}
			\toprule
            \footnotesize{Task} & \footnotesize{BC} & \footnotesize{IQ-Learn} & \footnotesize{OPOLO} & \footnotesize{MM} & \footnotesize{GAIfO} & \footnotesize{SFM} \\
			\midrule
            Cheetah Run & $77.0 \pm 11.1$ & $1.4 \pm 1.4$ & $747.5 \pm 6.7$ & $\mathbf{781.6 \pm 30.7}$ & $\mathbf{777.2 \pm 45.0}$ & $648.8 \pm 35.9$\\
            Cheetah Walk & $371.1 \pm 163.3$ & $5.7 \pm 6.9$ & $919 \pm 26.3$ & $895.6 \pm 128.2$ & $885.2 \pm 236.2$ & $\mathbf{945.1 \pm 33.9}$ \\
            Quadruped Jump & $150.8 \pm 29.7$ & $260.7 \pm 12.0$ & $198.9 \pm 50.6$ & $489.4 \pm 104.6$ & $505.8 \pm 192.3$ & $\mathbf{799.1 \pm 47.8}$\\
            Quadruped Run & $52.1 \pm 22.4$ & $174.7 \pm 7.7$ & $291.5 \pm 43.9$ & $433.9 \pm 347.4$ & $289.4 \pm 227.3$ & $\mathbf{671.7 \pm 65.9}$ \\
            Quadruped Stand & $351.6 \pm 68.5$ & $351.1 \pm 25.7$ & $378.7 \pm 37.3$ & $752.2 \pm 271.9$ & $804.7 \pm 211.5$ & $\mathbf{941.6 \pm 25.7}$\\
            Quadruped Walk & $119.0 \pm 40.9$ & $171.6 \pm 11.2$ & $391.8 \pm 57.9$ & $\mathbf{844.7 \pm 138.7}$ & $656.1 \pm 321.2$ & $759.9 \pm 177.5$\\
            Walker Flip & $39.6 \pm 17.7$ & $25.0 \pm 2.2$ & $\mathbf{913.6 \pm 2.5}$ & $249.1 \pm 230.8$ & $544.0 \pm 313.1$ & $856.9 \pm 64.5$ \\
            Walker Run & $24.5 \pm 6.1$ & $22.4 \pm 1.6$ & $\mathbf{706.2 \pm 7.9}$ & $496.8 \pm 264.3$ & $690.7 \pm 101.9$ & $653.6 \pm 26.7$\\
            Walker Stand & $168.5 \pm 48.9$ & $181.1 \pm 135.8$ & $846.2 \pm 256.9$ & $574.2 \pm 209.3$ & $810.4 \pm 250.3$ & $\mathbf{909.4 \pm 96.9}$\\
            Walker Walk & $35.1 \pm 29.6$ & $25.3 \pm 2.6$ & $738.9 \pm 399.7$ & $725.3 \pm 234.8$ & $792.8 \pm 242.2$ & $\mathbf{916.5 \pm 43.4}$\\
        \bottomrule
        	\end{tabular}
	\caption{Returns achieved by BC, IQ-Learn, OPOLO, state-only MM, GAIfO and SFM across tasks on the DMControl Suite. The average returns and standard deviation across 10 seeds are reported.}\label{tab:numbers_td7}
\end{table*}

\begin{figure}[ht]
    \noindent\includegraphics[width=\linewidth]{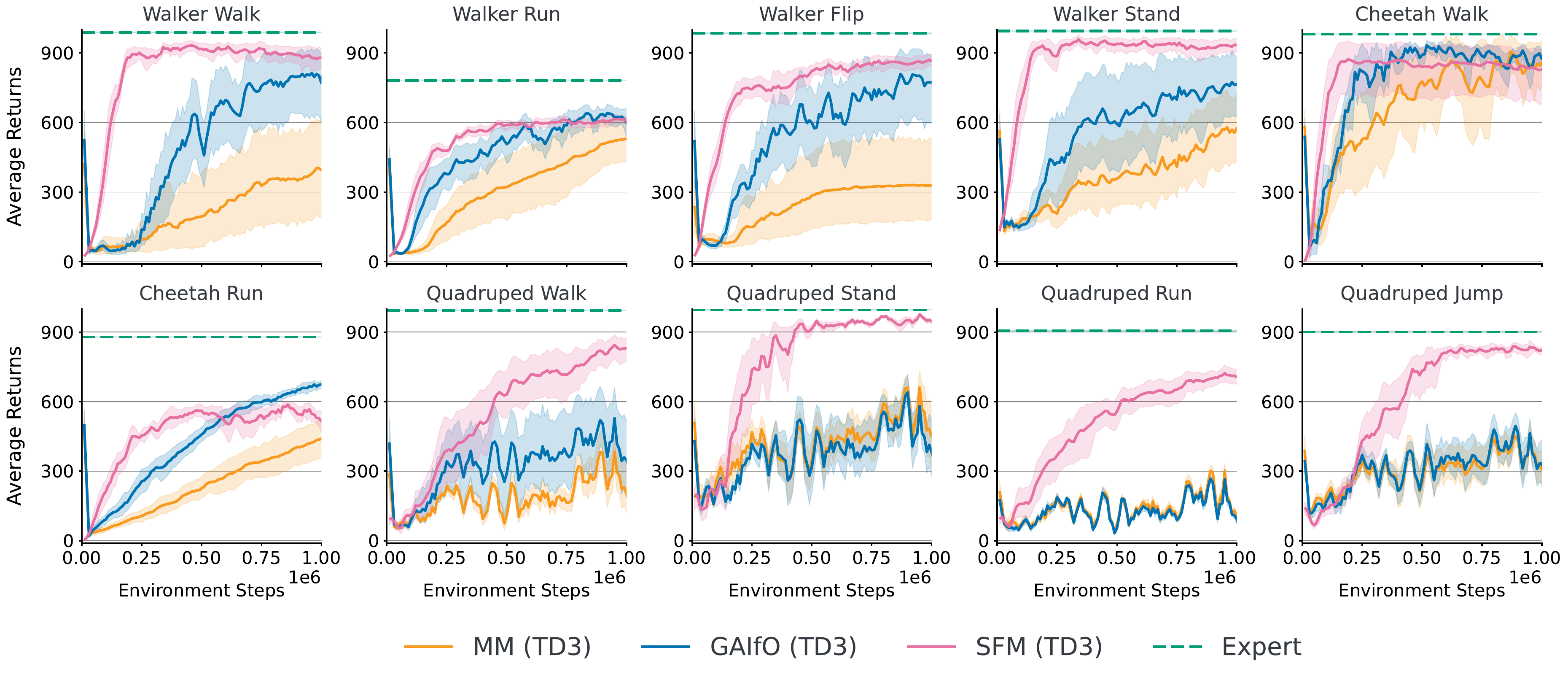} 
    \caption{
    Comparison of state-only \gls{irl} methods using the weaker TD3 policy optimizer. Notably, only \gls{sfm} consistently maintains strong performance with the weaker policy optimizer.
    }
    \label{fig:results_returns_td3}
\end{figure}

\begin{figure}
    \noindent\includegraphics[width=\linewidth]{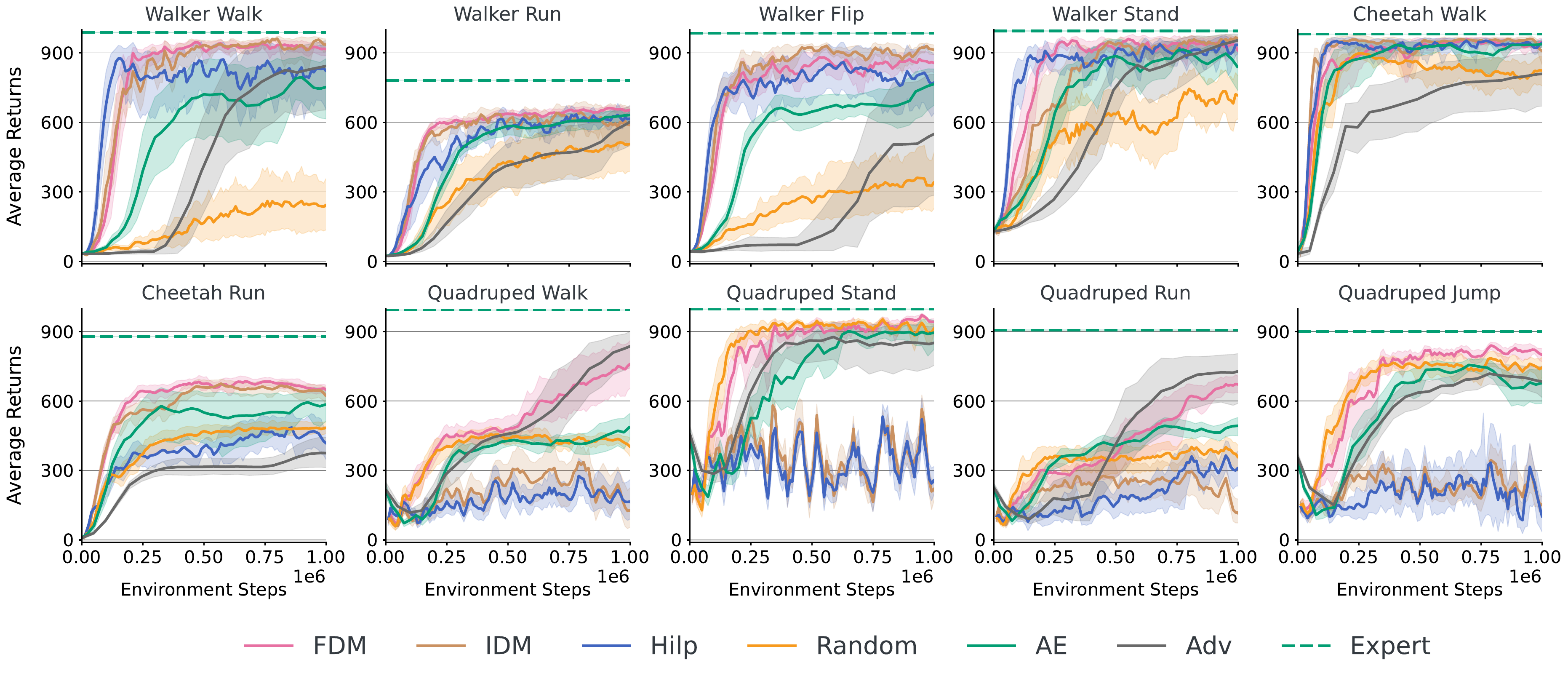} 
    \vspace{-2em}
    \caption{
    Effect of different base feature functions on the performance of the agent. Here, we compare with Random, Inverse Dynamics
    Model (IDM)~\citep{pathak2017curiosity}, Hilbert Representations (Hilp)~\citep{park2024foundation}, Autoencoder (AE), Adversarial representations~(Adv) and Forward Dynamics Models (FDM). FDM was found to work best across DMC tasks. Note that all base feature functions were jointly learned during training.
    }
    \label{fig:results_returns_features}
\end{figure}

\begin{table*}[hbt!]
	\centering
    \scriptsize
		\begin{tabular}{l|ccc}
			\toprule
            \footnotesize{Environment} & \footnotesize{MM (TD3)} & \footnotesize{GAIfO (TD3)} & \footnotesize{SFM (TD3)} \\
            \midrule
            Cheetah Run & $439.6 \pm 138.6$ & $\mathbf{674.0 \pm 27.7}$ & $514.7 \pm 77.9$ \\
            Cheetah Walk & $859.6 \pm 165.3$ & $\mathbf{873.9 \pm 58.2}$ & $829.7 \pm 226.3$ \\
            Quadruped Jump & $308.8 \pm 115.9$ & $334.3 \pm 159.8$ & $\mathbf{821.6 \pm 27.5}$ \\
            Quadruped Run & $107.0 \pm 22.8$ & $94.4 \pm 23.1$ & $\mathbf{705.6 \pm 57.3}$ \\
            Quadruped Stand & $449.7 \pm 206.1$ & $381.8 \pm 216.0$ & $\mathbf{946.3 \pm 20.5}$ \\
            Quadruped Walk & $201.0 \pm 175.8$ & $347.3 \pm 246.4$ & $\mathbf{829.3 \pm 86.9}$ \\
            Walker Flip & $328.7 \pm 287.8$ & $774.4 \pm 276.1$ & $\mathbf{865.5 \pm 37.7}$ \\
            Walker Run & $530.2 \pm 163.1$ & $600.9 \pm 105.0$ & $\mathbf{606.5 \pm 30.2}$ \\
            Walker Stand & $575.8 \pm 245.8$ & $764.8 \pm 220.7$ & $\mathbf{934.0 \pm 49.6}$ \\
            Walker Walk & $395.1 \pm 351.1$ & $769.7 \pm 258.1$ & $\mathbf{880.1 \pm 75.8}$ \\
                             \bottomrule
        	\end{tabular}
\caption{Comparison of state-only \gls{irl} methods using the weaker TD3 policy optimizer. This table presents returns achieved by state-only MM (TD3), GAIfO~(TD3) and SFM~(TD3) across tasks on the DMControl Suite. The average returns and standard deviation across 10 seeds are reported.} \label{tab:numbers_td3}
\end{table*}

\begin{table*}[hbt!]
	\centering
    \tiny
		\begin{tabular}{l|cccccc}
			\toprule
\footnotesize{Environment} & \footnotesize{Random} & \footnotesize{AE} & \footnotesize{Hilp} & \footnotesize{IDM} & \footnotesize{FDM} & \footnotesize{Adv} \\
\midrule
Cheetah Run      & $484.4 \pm 45.4$   & $585.7 \pm 93.6$   & $417.8 \pm 118.8$  & $622.0 \pm 69.5$  & $\mathbf{648.8 \pm 35.9}$  & $374.1 \pm 113.7$ \\
Cheetah Walk     & $823.8 \pm 107.6$  & $\mathbf{938.4 \pm 18.8}$   & $\mathbf{944.5 \pm 18.6}$   & $908.2 \pm 88.4$  & $\mathbf{945.1 \pm 33.9}$  & $812.5 \pm 244.5$ \\
Quadruped Jump   & $744.4 \pm 79.0$   & $678.0 \pm 126.6$  & $101.5 \pm 78.5$   & $151.3 \pm 59.0$  & $\mathbf{799.1 \pm 47.8}$  & $679.7 \pm 144.1$ \\
Quadruped Run    & $356.8 \pm 92.7$   & $493.4 \pm 57.6$   & $311.8 \pm 94.5$   & $118.0 \pm 86.1$  & $671.7 \pm 65.9$  & $\mathbf{737.0 \pm 196.8}$ \\
Quadruped Stand  & $914.0 \pm 33.1$   & $895.3 \pm 94.2$   & $259.1 \pm 102.0$  & $222.0 \pm 63.3$  & $\mathbf{941.6 \pm 25.7}$  & $858.4 \pm 140.3$ \\
Quadruped Walk   & $402.8 \pm 68.7$   & $489.7 \pm 68.4$   & $166.0 \pm 138.8$  & $129.0 \pm 145.6$ & $759.9 \pm 177.5$ & $\mathbf{849.1 \pm 140.8}$ \\
Walker Flip      & $341.8 \pm 204.4$  & $765.5 \pm 101.8$  & $771.9 \pm 197.0$  & $\mathbf{912.8 \pm 36.9}$  & $856.9 \pm 64.5$  & $565.1 \pm 439.9$ \\
Walker Run       & $506.6 \pm 181.8$  & $632.2 \pm 41.7$   & $615.7 \pm 53.8$   & $589.5 \pm 139.0$ & $\mathbf{653.6 \pm 26.7}$  & $620.5 \pm 158.2$ \\
Walker Stand     & $715.9 \pm 160.9$  & $836.2 \pm 140.0$  & $934.8 \pm 42.8$   & $\mathbf{960.6 \pm 22.9}$  & $909.4 \pm 96.9$  & $\mathbf{964.2 \pm 31.8}$ \\
Walker Walk      & $243.8 \pm 189.7$  & $752.5 \pm 164.9$  & $821.8 \pm 242.5$  & $\mathbf{936.3 \pm 48.8}$  & $\mathbf{916.5 \pm 43.4}$  & $849.6 \pm 289.8$ \\
\bottomrule
\end{tabular}
\caption{Effect of different base feature functions on the performance of the agent. Here, we compare with Random, Inverse Dynamics Model (IDM)~\citep{pathak2017curiosity}, Hilbert Representations (Hilp)~\citep{park2024foundation}, Autoencoder (AE), Adversarial representations~(Adv), and Forward Dynamics Models (FDM).  The table reports the returns achieved by each base feature function when trained with SFM across tasks on the DMControl Suite. The average returns and standard deviation across 10 seeds are reported. FDM was found to work best across DMC tasks. Note that all base feature functions were jointly learned during training.}
	\label{tab:numbers_features}
\end{table*}

\clearpage
\subsection{SFM with stochastic policy}\label{subsec:sfm-stochastic}
To extend SFM to stochastic policies, we propose having an agent comprising of a stochastic actor parameterized to predict the mean and standard deviation of a multi-variate gaussian distribution. 
Here, for a given state $s$, the action is sampled using $a\sim\pi_{\mu}(\cdot|s) = \mathcal{N}(m_\mu(s), \mathsf{diag}(\sigma^2_\mu(s))$, where $m_\mu:\mathcal{S}\to\mathbb{R}^n$ and $\sigma^2_\mu:\mathcal{S}\to\mathbb{R}^n_+$, for $\mathcal{A}=\mathbb{R}^n$.
The SF network architecture $\psi_{\theta}$ is the same as the SFM (TD3) variant, where the network estimates the SF for a state-action pair. 
The SF-network can be updated using 1-step TD error using the base features of the current state similar to \eqref{eq:sf_loss}.
The actor is updated using the update rule described in 
Proposition~\ref{prop:L_gap_actor_gradients} where we estimate the policy gradient via reparameterization trick with Gaussian policies-- particularly, we consider the class of Gaussian policies with diagonal covariance as in \citet{haarnoja2018soft}.
Finally, to prevent the policy from quickly collapsing to a nearly-deterministic one, we also include a policy entropy bonus in our actor updates.
We conduct experiments over the tasks from DMControl suite and present environment plots in \autoref{fig:results_returns_variants} and returns achieved in \autoref{tab:numbers_variants}.
We provide the implementations of SFM with stochastic policy in Appendix~\ref{sec:imp_stochastic_policy}
\begin{figure}[ht]
    \noindent\includegraphics[width=\linewidth]{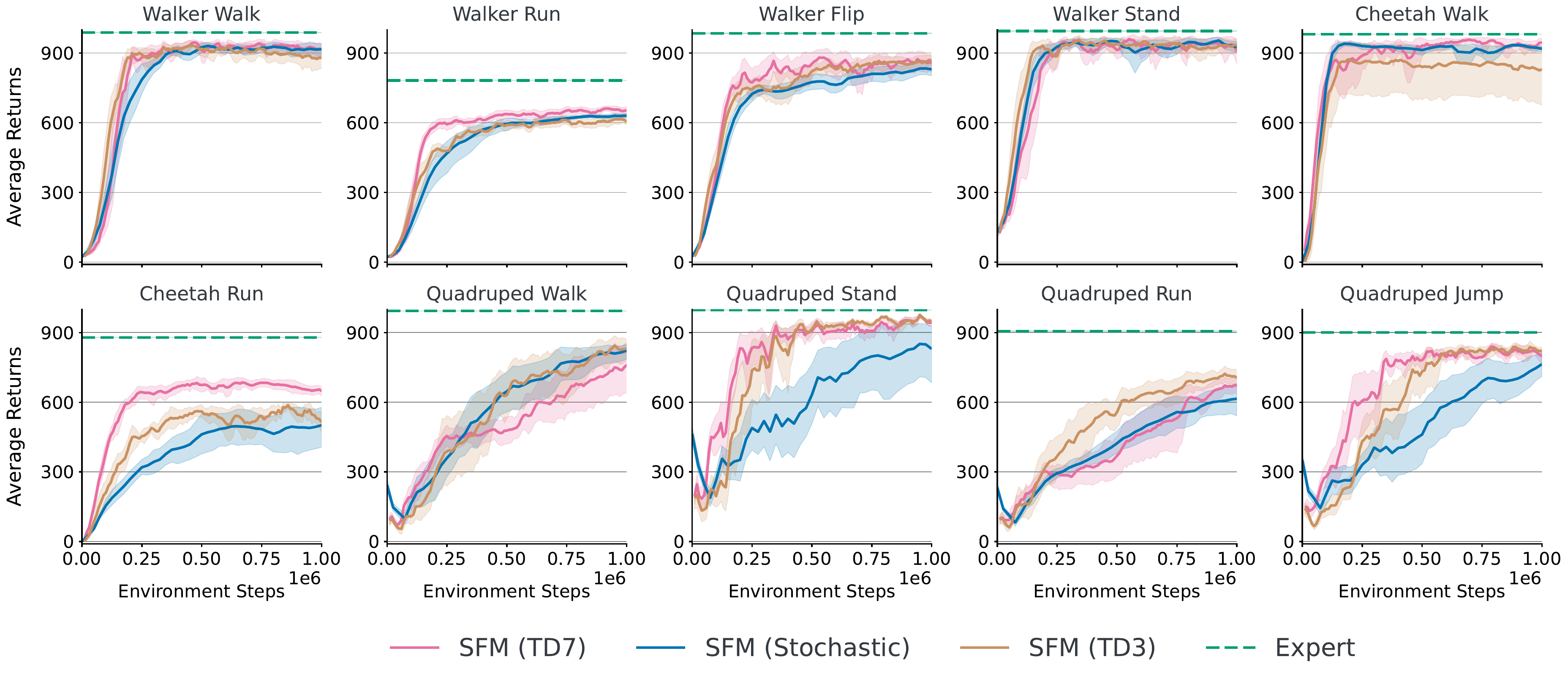} 
    \caption{
    Comparison of variants of SFM with TD7, TD3 and an entropy regularized stochastic policy.
    We observe that SFM can be trained with stochastic polices. However, the variants with deterministic policy optimizers work better on some tasks than the stochastic policy. 
    }
    \label{fig:results_returns_variants}
\end{figure}

\begin{table}[ht]
	\centering
    \scriptsize
    \begin{tabular}{l|ccc}
    \toprule
    \footnotesize{Environment}         & \footnotesize{SFM (TD7)}          & \footnotesize{SFM (TD3)}          & \footnotesize{SFM (Stochastic)}   \\
    \midrule
    Cheetah Run         & 648.8 ± 35.9       & 514.7 ± 77.9       & 500.9 ± 136.3      \\
    Cheetah Walk        & 945.1 ± 33.9       & 829.7 ± 226.3      & 918.8 ± 21.7       \\
    Quadruped Jump      & 799.1 ± 47.8       & 821.6 ± 27.5       & 764.0 ± 84.8       \\
    Quadruped Run       & 671.7 ± 65.9       & 705.6 ± 57.3       & 614.9 ± 113.4      \\
    Quadruped Stand     & 941.6 ± 25.7       & 946.3 ± 20.5       & 829.5 ± 224.1      \\
    Quadruped Walk      & 759.9 ± 177.5      & 829.3 ± 86.9       & 821.9 ± 56.3       \\
    Walker Flip         & 856.9 ± 64.5       & 865.5 ± 37.7       & 830.4 ± 45.4       \\
    Walker Run          & 653.6 ± 26.7       & 606.5 ± 30.2       & 630.6 ± 17.5       \\
    Walker Stand        & 909.4 ± 96.9       & 934.0 ± 49.6       & 925.7 ± 38.9       \\
    Walker Walk         & 916.5 ± 43.4       & 880.1 ± 75.8       & 916.3 ± 43.9       \\
    \bottomrule
    \end{tabular}
\caption{
    Comparison of SFM with a stochastic policy and variants based on deterministic policy optimizers (TD3 \& TD7). 
    The table reports the returns achieved by each base feature function when trained with SFM across tasks on the DMControl Suite. The average returns and standard deviation across 10 seeds are reported.}
	\label{tab:numbers_variants}
\end{table}

\clearpage
\section{Implementation Details}
\label{app:implementation}
Since \gls{sfm} does not involve learning an explicit reward function and cannot leverage an off-the-shelf \gls{rl} algorithm to learn a Q-funtion, we propose a novel architecture for our method.
\gls{sfm} is composed of 3 different components- actor $\pi_{\mu}$, \gls{sf} network~$\pmb{\psi}_{\theta}$, base feature function $\phi$ and $f$.
Taking inspiration from state-of-the-art \gls{rl} algorithms, we maintains target networks. 
Since, \gls{sf} network acts similarly to a critic in actor-critic like algorithms, \gls{sfm} comprises of two networks to estimate the \gls{sf}~\citep{fujimoto2018td3}.
Here, instead taking a minimum over estimates of \gls{sf} from these two networks, our method performed better with average over the two estimates of \gls{sf}.
To implement the networks of \gls{sfm}, we incorporated several components from the TD7~\citep{fujimoto2023for}, TD3~\citep{fujimoto2018td3}, and SAC~\citep{haarnoja2018soft} algorithm which are described in this section.
Moreover, \gls{sfm} does not require techniques like gradient penalty~\citep{gulrajani2017improved}, the OAdam optimizer~\citep{daskalakis2017training} and a learning rate scheduler.

\subsection{TD7-based Network Architecture}
The architecture used in this work is inspired from the TD7~\citep{fujimoto2023for} algorithms for continuous control tasks (\autoref{pseudo:Actor_SF}). We will describe the networks and sub-components used below:
\begin{itemize}
    \item Two functions to estimate the \gls{sf} ($\pmb{\psi}_{\theta_1}$, $\pmb{\psi}_{\theta_2}$)
    \item Two target functions to estimate the \gls{sf} ($\pmb{\psi}_{\bar{\theta}_1}$, $\pmb{\psi}_{\bar{\theta}_2}$)
    \item A policy network $\pi_{\mu}$
    \item A target policy network $\pi_{\bar{\mu}}$
    \item An encoder with sub-components $f_{\nu}, g_{\nu}$
    \item A target encoder with sub-components $f_{\bar{\nu}}, g_{\bar{\nu}}$
    \item A fixed target encoder with sub-components $f_{\bar{\bar{\nu}}}, g_{\bar{\bar{\nu}}}$
    \item A checkpoint policy $\pi_c$ and the checkpoint encoder $f_c$
    \item A base feature function $\phi$
\end{itemize}

\newcommand{\target}{\mathsf{target}}
\newcommand{\clip}{\mathrm{clip}}

\textbf{Encoder}: The encoder comprises of two sub-networks to output state and state-action embedding, such that $z^s = f_{\nu}(s)$ and $z^{sa}=g_{\nu}(z^s, a)$.
The encoder is updated using the following loss:
\begin{align}
    \mathcal{L}_{\mathsf{Encoder}}(f_{\nu}, g_{\nu}) &= \Big(g_{\nu}(f_{\nu}(s), a) - |f_{\nu}(s')|_{\times}\Big)^2\\
    &=\Big(z^{sa} - |z^{s'}|_{\times}\Big)^2,
\end{align}
where $s,a,s'$ denotes the sampled transitions and $|~.~|_{\times}$ is the stop-gradient operation.
Also, we represent $\bar{z}^s=f_{\bar{\nu}}(s)$, $\bar{z}^{sa}=g_{\bar{\nu}}(\bar{z}^s, a)$, $\bar{\bar{z}}^s=f_{\bar{\bar{\nu}}}(s)$, and $\bar{\bar{z}}^{sa}=g_{\bar{\bar{\nu}}}(\bar{\bar{z}}^s, a)$.

\textbf{\gls{sf} network}:
Motivated by standard \gls{rl} algorithms~\citep{fujimoto2018td3,fujimoto2023for}, \alg{} uses a pair of networks to estimate the \gls{sf}. The network to estimate \gls{sf} are updated with the following loss:
\begin{align}
    \mathcal{L}_{\mathsf{SF}}(\pmb{\psi}_{\theta_i}) &= \|\target - \pmb{\psi}_{\theta_i}(\bar{z}^{sa}, \bar{z}^{s}, s, a)\|_2^2,\\
    \target &= \phi(s) + \frac{1}{2}\gamma * \clip([\pmb{\psi}_{\bar{\theta}_1}(x) + \pmb{\psi}_{\bar{\theta}_2}(x)], \pmb{\psi}_{\min}, \pmb{\psi}_{\max}),\label{eq:sf_target_bootstrapping}\\
    x&=[\bar{\bar{z}}^{s'a'}, \bar{\bar{z}}^{s'}, s', a']\\
    a'&=\pi_{\bar{\mu}}(\bar{\bar{z}}^{s'}, s') + \epsilon, \text{where} ~\epsilon\sim \clip(\mathcal{N}(0, \sigma^2), -c, c).
\end{align}
Here, instead of taking the minimum over the \gls{sf} networks for bootstrapping at the next state~\citep{fujimoto2018td3}, the mean over the estimates of \gls{sf} is used.
The action at next state $a'$ is samples similarly to TD3~\citep{fujimoto2018td3} and the same values of ($z^{s},z^{sa}$) are used for each \gls{sf} network. Moreover, the algorithm does clipping similar to TD7~\citep{fujimoto2023for} on the predicted \gls{sf} at the next state which is updated using $target$ (\eqref{eq:sf_target_bootstrapping}) at each time step, given by:
\begin{align}
    \pmb{\psi}_{\min} &\leftarrow \min(\pmb{\psi}_{\min}, \target)\\
    \pmb{\psi}_{\max} &\leftarrow \min(\pmb{\psi}_{\max}, \target)
\end{align}

\textbf{Policy}:
\alg{} uses a single policy network which takes [$z^s$, $s$] as input and is updated using the following loss function described in \autoref{sec:method}.

Upon every $target\_update\_frequency$ training steps, the target networks are updated by cloning the current network parameters and remains fixed:
\begin{align}
    (\theta_1, \theta_2) &\leftarrow (\bar{\theta}_1, \bar{\theta}_2)\\
    \mu &\leftarrow \bar{\mu}\\
    (\nu_1, \nu_2) &\leftarrow (\bar{\nu}_1, \bar{\nu}_2)\\
    (\bar{\nu}_1, \bar{\nu}_2) &\leftarrow (\bar{\bar{\nu}}_1, \bar{\bar{\nu}}_2)\\
\end{align}
Moreover, the agent maintains a checkpointed network similar to TD7~(Refer to Appendix F of TD7~\citep{fujimoto2023for} paper). However, TD7 utilizes the returns obtained in the environment for checkpointing. 
Since average returns is absent in the \gls{irl} tasks, it is not clear how to checkpoint policies.
Towards this end, we propose using the negative of \gls{mse} between the \gls{sf} of trajectories generated by agent and the \gls{sf} of demonstrations as a proxy of checkpointing. 
To highlight some differences with the TD7~\citep{fujimoto2023for} algorithm, \alg{} does not utilize a LAP~\citep{fujimoto2020equivalence} and Huber loss to update \gls{sf} network, and we leave investigating them for future research.

\clearpage

\begin{tcolorbox}[title={\begin{pseudo}\label{pseudo:Actor_SF}\alg{} (TD7) Network Details\end{pseudo}}]
\textbf{Variables:}
\begin{verbatim}
phi_dim = 128
zs_dim = 256
\end{verbatim}

\vspace{-8pt}
\hrulefill

\textbf{Value \gls{sf} Network:}

$\triangleright$ \alg{} uses two \gls{sf} networks each with similar architechture and forward pass. 
\begin{verbatim}
l0 = Linear(state_dim + action_dim, 256)
l1 = Linear(zs_dim * 2 + 256, 256)
l2 = Linear(256, 256)
l3 = Linear(256, phi_dim)
\end{verbatim} 

\textbf{\gls{sf} Network $\pmb{\psi}_{\theta}$ Forward Pass:}
\begin{verbatim}
input = concatenate([state, action])
x = AvgL1Norm(l0(inuput))
x = concatenate([zsa, zs, x])
x = ELU(l1(x))
x = ELU(l2(x))
sf = l3(x)
\end{verbatim} 

\vspace{-8pt}
\hrulefill

\textbf{Policy $\pi$ Network:}
\begin{verbatim}
l0 = Linear(state_dim, 256)
l1 = Linear(zs_dim + 256, 256)
l2 = Linear(256, 256)
l3 = Linear(256, action_dim)
\end{verbatim} 

\textbf{Policy $\pi$ Forward Pass:}
\begin{verbatim}
input = state
x = AvgL1Norm(l0(input))
x = concatenate([zs, x])
x = ReLU(l1(x))
x = ReLU(l2(x))
action = tanh(l3(x))
\end{verbatim} 

\vspace{-8pt}
\hrulefill

\textbf{State Encoder $f$ Network:}
\begin{verbatim}
l1 = Linear(state_dim, 256)
l2 = Linear(256, 256)
l3 = Linear(256, zs_dim)
\end{verbatim} 

\textbf{State Encoder $f$ Forward Pass:}
\begin{verbatim}
input = state
x = ELU(l1(input))
x = ELU(l2(x))
zs = AvgL1Norm(l3(x))
\end{verbatim} 

\vspace{-8pt}
\hrulefill

\textbf{State-Action Encoder $g$ Network:}
\begin{verbatim}
l1 = Linear(action_dim + zs_dim, 256)
l2 = Linear(256, 256)
l3 = Linear(256, zs_dim)
\end{verbatim} 

\textbf{State-Action Encoder $g$ Forward Pass:}
\begin{verbatim}
input = concatenate([action, zs])
x = ELU(l1(input))
x = ELU(l2(x))
zsa = l3(x)
\end{verbatim} 
\end{tcolorbox}

\subsection{TD3-based Network Architecture}
The architecture used in this work is inspired from the TD3~\citep{fujimoto2018td3} algorithms for continuous control tasks (\autoref{pseudo:Actor_SF_TD3}). We will describe the networks and sub-components used below:
\begin{itemize}
    \item Two functions to estimate the \gls{sf} ($\pmb{\psi}_{\theta_1}$, $\pmb{\psi}_{\theta_2}$)
    \item Two target functions to estimate the \gls{sf} ($\pmb{\psi}_{\bar{\theta}_1}$, $\pmb{\psi}_{\bar{\theta}_2}$)
    \item A policy network $\pi_{\mu}$
    \item A base feature function $\phi$
\end{itemize}

\textbf{\gls{sf} network}:
Motivated by standard \gls{rl} algorithms~\citep{fujimoto2018td3,fujimoto2023for}, \alg{} uses a pair of networks to estimate the \gls{sf}. The network to estimate \gls{sf} are updated with the following loss:
\begin{align}
    \mathcal{L}_{\mathsf{SF}}(\pmb{\psi}_{\theta_i}) &= \|\target - \pmb{\psi}_{\theta_i}(s, a)\|_2^2,\\
    \target &= \phi(s) + \frac{1}{2}\gamma * (\pmb{\psi}_{\bar{\theta}_1}(x) + \pmb{\psi}_{\bar{\theta}_2}(x)),\label{eq:sf_target_bootstrapping}\\
    x&=[s', a']\\
    a'&=\pi_{\mu}(s') + \epsilon, \text{where} ~\epsilon\sim \clip(\mathcal{N}(0, \sigma^2), -c, c).
\end{align}
Here, instead of taking the minimum over the \gls{sf} networks for bootstrapping at the next state~\citep{fujimoto2018td3}, the mean over the estimates of \gls{sf} is used.
The action at next state $a'$ is samples similarly to TD3~\citep{fujimoto2018td3}.
Lastly, target networks to estimate SF is updated via polyak averaging (with polyak factor of $.995$, given by
\begin{equation}
    \bar{\theta}_i \leftarrow \alpha \bar{\theta}_i + (1 - \alpha)\theta_i,\quad \text{for}~i=1,2.
\end{equation}

\textbf{Policy}:
\alg{} uses a single deterministic policy network which takes state~$s$ to predict action $a$.

\begin{tcolorbox}[title={\begin{pseudo}\label{pseudo:Actor_SF_TD3}\alg{} (TD3) Network Details\end{pseudo}}]
\textbf{Variables:}
\begin{verbatim}
phi_dim = 128
\end{verbatim}

\vspace{-8pt}
\hrulefill

\textbf{Value \gls{sf} Network:}

$\triangleright$ \alg{} uses two \gls{sf} networks each with similar architechture and forward pass. 
\begin{verbatim}
l0 = Linear(state_dim + action_dim, 256)
l1 = Linear(256, 256)
l2 = Linear(256, phi_dim)
\end{verbatim} 

\textbf{\gls{sf} Network $\pmb{\psi}_{\theta}$ Forward Pass:}
\begin{verbatim}
input = concatenate([state, action])
x = ReLU(l0(inuput))
x = ReLU(l1(x))
x = l2(x)
\end{verbatim} 

\vspace{-8pt}
\hrulefill

\textbf{Policy $\pi$ Network:}
\begin{verbatim}
l0 = Linear(state_dim, 256)
l1 = Linear(256, 256)
l2 = Linear(256, action_dim)
\end{verbatim} 

\textbf{Policy $\pi$ Forward Pass:}
\begin{verbatim}
input = state
x = ReLU(l0(input))
x = ReLU(l1(x))
action = tanh(l2(x))
\end{verbatim} 

\vspace{-8pt}
\end{tcolorbox}

\subsection{SFM with Stochastic Policy}
\label{sec:imp_stochastic_policy}
In light of \autoref{prop:propsf_stochastic}, for a stochastic policy, \autoref{prop:L_gap_actor_gradients} can again be used to update the policy parameters.
Indeed,
following the proof of \autoref{prop:L_gap_actor_gradients}, we have that

\begin{equation}
    \label{eq:L_gap_actor_gradients:stochastic}
    \begin{aligned}
        \nabla_\policyparam J(\pi_\policyparam; \witnessrew[\pi_\policyparam])
        &= \underset{s\sim\mathcal{B}}{\mathbb{E}}\left[\nabla_\mu\int_{\mathcal{A}}\pi_\policyparam(a\mid s)Q_\theta(s, a)\mathrm{d}a\right]\\
        &= \underset{s\sim\mathcal{B}}{\mathbb{E}}\left[\int_{\mathcal{A}}\sfsa[]_\theta(s, a)^\top\witness[\pi_\policyparam]\nabla_\policyparam\pi_\policyparam(a\mid s)\mathrm{d}a\right]\\
        &= \sum_{i=1}^d(\witness[\pi_\policyparam])_i\underset{s\sim\mathcal{B}}{\mathbb{E}}\left[\int_{\mathcal{A}}\sfsa[]_{\theta,i}(s, a)\nabla_\policyparam\pi_\policyparam(a\mid s)\mathrm{d}a\right].
    \end{aligned}
\end{equation}

The integral above can be estimated without bias by Monte Carlo, using the log-derivative (REINFORCE) trick, or in the case of certain policy classes, the reparameterization trick \citep{kingma2013auto, haarnoja2018soft}; the latter of which tends to result in less variance in practice \citep{haarnoja2018soft}, so we use it in our experiments.
Towards this end, let $\actionlatentspace$ denote a measurable space with $\actionlatentdist$ a probability measure over $\actionlatentspace$, and suppose there exists $\actionmap: \mathcal{S}\times\actionlatentspace\to\mathcal{A}$ such that
\begin{align*}
    \pi_\policyparam(\cdot\mid s) = \mathrm{Law}(\actionmap(s, \epsilon)),\ \epsilon\sim\actionlatentdist.
\end{align*}

Under such parameterizations, we have

\begin{align*}
    \int_{\mathcal{A}}\pmb{\psi}_{\theta, i}(s, a)\nabla_\policyparam\pi_\policyparam(a\mid s)
    &= \nabla_\policyparam\mathbb{E}_{a\sim\pi_\mu(\cdot\mid s)}[\pmb{\psi}_{\theta, i}(s, a)]\\
    &= \mathbb{E}_{\epsilon\sim\actionlatentdist}[\nabla_\mu\pmb{\psi}_{\theta, i}(s, \actionmap(s, \epsilon))]\\
    &= \mathbb{E}_{\epsilon\sim\actionlatentdist}[\nabla_\policyparam\actionmap(s, \epsilon)\nabla_a\pmb{\psi}_{\theta, i}(s, a)\lvert_{a=\actionmap(s, \epsilon)}].
\end{align*}

\begin{example}
    \label{ex:gaussian-reparam}
    Gaussian policies---that is, policies of the form $s\mapsto \mathcal{N}(m(s), \Sigma(s))$ for $m:\mathcal{S}\to\mathbb{R}^d$ and $\Sigma:\mathcal{S}\to\mathbb{R}^{d\times d}$---can be reparameterized in the aforementioned manner.
    Taking $\mathcal{Y}=\mathcal{A}=\mathbb{R}^d$ and $\varrho = \mathcal{N}(0, I_d)$, construct the map $\actionmap:\mathcal{S}\times\mathcal{Y}\to\mathcal{A}$ by
    \begin{align*}
        \actionmap(s, \epsilon) = m_\policyparam(s) + \Sigma_\policyparam(s)\epsilon.
    \end{align*}

    Clearly, it holds that $\mathrm{Law}(\actionmap(s, \epsilon)) = \mathcal{N}(m_\policyparam(s), \Sigma_\policyparam(s))$, accommodating any Gaussian policy.
\end{example}

Altogether, our gradient estimate is computed as follows,

\begin{equation}
    \label{eq:L_gap_actor_gradients:stochastic:sampled}
    \begin{aligned}
        \nabla_\policyparam J(\pi_\policyparam; \witnessrew[\pi_\policyparam])
        &\approx
        \sum_{i=1}^d\hat{w}_i
        \frac{1}{N_1}\sum_{j=1}^{N_1}\nabla_\policyparam\actionmap(s_{1, j}, \epsilon_{1, j})\nabla_a\pmb{\psi}_{\theta, i}(s_{1, j}, a)\lvert_{a=\actionmap(s_{1, j}, \epsilon_{1, j})}\\
        \hat{w} &=
        (1-\gamma)^{-1}\frac{1}{N_2}\sum_{j=1}^{N_2}\left[
        \pmb{\psi}_{\theta}(s_{2, j}, \actionmap(s_{2, j}, \epsilon_{2, j})) - \gamma\pmb{\psi}_{\bar{\theta}}(s'_{2, j}, \actionmap(s'_{2, j}, \epsilon'_{2, j}))
        \right] - \expectsfinit[E]\\
        \{s_{1, j}\}_{j=1}^{N_1} &\overset{\mathrm{iid}}{\sim} \mathcal{B},\quad \{\epsilon_{1,j}\}_{j=1}^{N_1}\overset{\mathrm{iid}}{\sim}\mathcal{N}(0, I_n)\\
        \{(s_{2, j}, s'_{2, j})\}_{j=1}^{N_2} &\overset{\mathrm{iid}}{\sim} \mathcal{B},\quad \{\epsilon_{2,j}\}_{j=1}^{N_2}\overset{\mathrm{iid}}{\sim}\mathcal{N}(0, I_n),\ \{\epsilon'_{2,j}\}_{j=1}^{N_2}\overset{\mathrm{iid}}{\sim}\mathcal{N}(0, I_n).
    \end{aligned}
\end{equation}
We note that, to compute an unbiased gradient from samples, minibatches of $(s, s')$ pairs must be sampled independently for the estimator $\hat{w}$ of $\witness[\pi_\policyparam]$, and likewise action samples must be drawn independently in the computation of $\hat{w}$.

Finally, to prevent the policy from quickly collapsing to a nearly-deterministic one, we also include a policy entropy bonus in our actor updates.
The architecture used in this work is inspired from the SAC~\citep{haarnoja2018soft} algorithms for continuous control tasks (\autoref{pseudo:Actor_SF_stochastic}). We will describe the networks and sub-components used below:
\begin{itemize}
    \item Two functions to estimate the \gls{sf} ($\pmb{\psi}_{\theta_1}$, $\pmb{\psi}_{\theta_2}$)
    \item Two target functions to estimate the \gls{sf} ($\pmb{\psi}_{\bar{\theta}_1}$, $\pmb{\psi}_{\bar{\theta}_2}$)
    \item A policy network $\pi_{\mu}$
    \item A base feature function $\phi$
\end{itemize}

\textbf{\gls{sf} network}:
Motivated by standard \gls{rl} algorithms~\citep{fujimoto2018td3,fujimoto2023for}, \alg{} uses a pair of networks to estimate the \gls{sf}. The network to estimate \gls{sf} are updated with the following loss:
\begin{align*}
    \mathcal{L}_{\mathsf{SF}}(\pmb{\psi}_{\theta_i}) &= \|\target - \pmb{\psi}_{\theta_i}(s, a)\|_2^2,\\
    \target &= \phi(s) + \frac{1}{2}\gamma * (\pmb{\psi}_{\bar{\theta}_1}(x) + \pmb{\psi}_{\bar{\theta}_2}(x)),\label{eq:sf_target_bootstrapping}\\
    x&=[s', a']\\
    a'&\sim\pi_{\mu}(s')
\end{align*}
Here, instead of taking the minimum over the \gls{sf} networks for bootstrapping at the next state~\citep{fujimoto2018td3}, the mean over the estimates of \gls{sf} is used.
The action at next state $a'$ is samples similarly to TD3~\citep{fujimoto2018td3}.
Lastly, target networks to estimate SF is updated via polyak averaging (with polyak factor of $.995$, given by
\begin{equation}
    \bar{\theta}_i \leftarrow \alpha \bar{\theta}_i + (1 - \alpha)\theta_i,\quad \text{for}~i=1,2.
\end{equation}

\textbf{Policy}:
\alg{} uses a single Gaussian policy network which takes state~$s$ to sample an action $a$ as described earlier.

\begin{tcolorbox}[title={\begin{pseudo}\label{pseudo:Actor_SF_stochastic}\alg{} (Stochastic) Network Details\end{pseudo}}]
\textbf{Variables:}
\begin{verbatim}
phi_dim = 128
\end{verbatim}

\vspace{-8pt}
\hrulefill

\textbf{Value \gls{sf} Network:}

$\triangleright$ \alg{} uses two \gls{sf} networks each with similar architechture and forward pass. 
\begin{verbatim}
l0 = Linear(state_dim + action_dim, 256)
l1 = Linear(256, 256)
l2 = Linear(256, phi_dim)
\end{verbatim} 

\textbf{\gls{sf} Network $\pmb{\psi}_{\theta}$ Forward Pass:}
\begin{verbatim}
input = concatenate([state, action])
x = ReLU(l0(inuput))
x = ReLU(l1(x))
x = l2(x)
\end{verbatim} 

\vspace{-8pt}
\hrulefill

\textbf{Policy $\pi$ Network:}
\begin{verbatim}
l0 = Linear(state_dim, 256)
l1 = Linear(256, 256)
lm = Linear(256, action_dim)
ls = Linear(256, action_dim)
\end{verbatim} 

\textbf{Policy $\pi$ Forward Pass:}
\begin{verbatim}
input = state
x = ReLU(l0(input))
x = ReLU(l1(x))
mean = tanh(lm(x))
std = softplus(ls(x))
\end{verbatim} 

\vspace{-8pt}
\end{tcolorbox}

\subsection{Base Features}
Since we use a base feature function $\phi$, we have two networks- 1) To provide the embedding for the state, and 2) To predict the next state from the current state and action. \autoref{pseudo:base_features} provides the description of the network architectures and the corresponding forward passes.
\begin{tcolorbox}[title={\begin{pseudo}\label{pseudo:base_features} Base Feature Network Details \end{pseudo}}]
\textbf{Variables:}
\begin{verbatim}
phi_dim = 128
\end{verbatim}

\vspace{-8pt}
\hrulefill

\textbf{Base Feature Network $\phi$ to encode state:}

\begin{verbatim}
l0 = Linear(state_dim, 512)
l2 = Linear(512, 512)
l3 = Linear(512, phi_dim)
\end{verbatim} 

\textbf{Base Feature $\phi$ Forward Pass:}
\begin{verbatim}
input = state
x = Layernorm(l1(x))
x = tanh(x)
x = ReLU(x)
phi_s = L2Norm(l3(x))
\end{verbatim} 

\vspace{-8pt}
\hrulefill

\textbf{FDM Network:}
\begin{verbatim}
l0 = Linear(phi_dim + action_dim, 512)
l1 = Linear(512, 512)
l2 = Linear(512, action_dim)
\end{verbatim} 

\textbf{FDM Network Forward Pass:}
\begin{verbatim}
input = concatenate([phi_s, action])
x = ReLU(l0(x))
x = ReLU(l1(x))
action = tanh(l2(x))
\end{verbatim} 
\end{tcolorbox}

\subsection{State-only adversarial baselines}
For the state-only MM method, we used the same architecture as TD7~\citep{fujimoto2023for} or TD3~\citep{fujimoto2018td3} for the \gls{rl} subroutine. We kept the same architecture of the discriminator as provided in the official implementation.
However, we modified the discriminator to take only states as inputs.
Additionally, we used gradient penalty and learning rate decay to update the discriminator, and OAdam optimizer~\citep{daskalakis2017training} for all networks.
For GAIfO~\citep{torabi2018generative}, we used the same architecture as state-only MM.
However, the discriminator takes the state-transition denoted as the state and next-state pair as input.
\clearpage
\section{Hyperparameters}
\label{app:hyperparameters}
In \autoref{tab:appendix_hyperparameter}, we provide the details of the hyperparameters used for learning. 
Many of our hyperparamters are similar to the TD7~\citep{fujimoto2023for} algorithm.
Important hyperparameters include the discount factor $\gamma$ for the SF network and tuned it with values $\gamma=[0.98, 0.99, 0.995]$ and report the ones that worked best in the table.
Rest, our method was robust to hyperparameters like learning rate and batch-size used during training. 

\begin{table*}[hbt!]
	\centering
		\begin{tabular}{l|c}
			\toprule
			\textbf{Name} & \textbf{Value}\\
			\midrule
                 Batch Size & 1024\\
                 Discount factor $\gamma$ for SF & .99\\
                 Actor Learning Rate & 5e-4 \\
                 SF network Learning Rate & 5e-4 \\
                 Base feature function learning Rate & 5e-4 \\
                 Network update interval & 250 \\
                 Target noise & .2 \\
                 Target Noise Clip & .5 \\
                 Action noise & .1 \\
                 Environments steps & 1e6 \\
                 \bottomrule
        	\end{tabular}
	\caption{Hyper parameters used to train \alg{}. }
	\label{tab:appendix_hyperparameter}
\end{table*}
\end{document}